\newcommand{\tvx}{\widetilde{\vx}}
\newcommand{\pa}[0]{\textrm{pa}}
\newcommand{\dkls}[3]{\mathbb{D}_{KL}^{#1}[#2 \, \|\, #3]}
\newcommand{\deriv}[2]{\frac{\partial{#1}}{\partial{#2}}}
\newcommand{\secondderiv}[2]{\frac{\partial^2{#1}}{\partial{#2}^2}}
\newcommand\cut[1]{}
\newcommand{\elbofinal}{\mathcal{L}}
\newcommand{\tvlambda}{\widetilde{\vlambda}}
\newcommand{\tlambda}{\widetilde{\lambda}}
\newcommand{\tp}{\tilde{p}}
\newcommand{\squishlist}{
   \begin{list}{$\bullet$}
    { \setlength{\itemsep}{0pt}      \setlength{\parsep}{3pt}
      \setlength{\topsep}{3pt}       \setlength{\partopsep}{0pt}
      \setlength{\leftmargin}{1.5em} \setlength{\labelwidth}{1em}
      \setlength{\labelsep}{0.5em} } }
\newcommand{\squishlisttwo}{
   \begin{list}{$\bullet$}
    { \setlength{\itemsep}{0pt}    \setlength{\parsep}{0pt}
      \setlength{\topsep}{0pt}     \setlength{\partopsep}{0pt}
      \setlength{\leftmargin}{2em} \setlength{\labelwidth}{1.5em}
      \setlength{\labelsep}{0.5em} } }
\newcommand{\squishend}{
    \end{list}  }
\newtheorem{thm}{Theorem}{}
{}
{}
\newtheorem{lemma}{Lemma}{}
{}
\newcommand{\half}{\mbox{$\frac{1}{2}$}}
\newcommand{\real}{\mbox{$\mathbb{R}$}}
\newcommand{\rnd}[1]{\left(#1\right)}
\newcommand{\sqr}[1]{\left[#1\right]}
\newcommand{\crl}[1]{\left\{#1\right\}}
\newcommand{\ang}[1]{\langle#1\rangle}
\newcommand{\myexpect}{\mathbb{E}}
\newcommand{\Ga}{\mbox{Ga}}
\newcommand{\gauss}{\mbox{${\cal N}$}}
\newcommand{\myvec}[1]{\mbox{$\mathbf{#1}$}}
\newcommand{\myvecsym}[1]{\mbox{$\boldsymbol{#1}$}}
\newcommand{\veta}{\mbox{$\myvecsym{\eta}$}}
\newcommand{\vmu}{\mbox{$\myvecsym{\mu}$}}
\newcommand{\vlambda}{\mbox{$\myvecsym{\lambda}$}}
\newcommand{\vphi}{\mbox{$\myvecsym{\phi}$}}
\newcommand{\vtheta}{\mbox{$\myvecsym{\theta}$}}
\newcommand{\va}{\mbox{$\myvec{a}$}}
\newcommand{\vb}{\mbox{$\myvec{b}$}}
\newcommand{\vg}{\mbox{$\myvec{g}$}}
\newcommand{\vm}{\mbox{$\myvec{m}$}}
\newcommand{\vw}{\mbox{$\myvec{w}$}}
\newcommand{\vx}{\mbox{$\myvec{x}$}}
\newcommand{\vy}{\mbox{$\myvec{y}$}}
\newcommand{\vz}{\mbox{$\myvec{z}$}}
\newcommand{\vC}{\mbox{$\myvec{C}$}}
\newcommand{\vI}{\mbox{$\myvec{I}$}}
\newcommand{\vK}{\mbox{$\myvec{K}$}}
\newcommand{\vV}{\mbox{$\myvec{V}$}}
\newcommand{\vW}{\mbox{$\myvec{W}$}}
\newcommand{\vX}{\mbox{$\myvec{X}$}}
\newcommand{\vY}{\mbox{$\myvec{Y}$}}
\newcommand{\vZ}{\mbox{$\myvec{Z}$}}
\newcommand{\union}{\mbox{$\cup$}}
\newcommand{\trace}{\mbox{Tr}}
\newcommand{\be}{\begin{equation}}
\newcommand{\ee}{\end{equation}}
\newcommand{\bea}{\begin{eqnarray}}
\newcommand{\eea}{\end{eqnarray}}
\newcommand{\beaa}{\begin{eqnarray*}}
\newcommand{\eeaa}{\end{eqnarray*}}
\newmdenv[
  topline=false,
  bottomline=false,
  rightline=false,
  skipabove=\topsep,
  ]{siderules}
\begin{document}

\runningtitle{Conjugate-Computation Variational Inference}

%
\runningauthor{Khan, Lin}

\twocolumn[

\aistatstitle{Conjugate-Computation Variational Inference :
Converting Variational Inference in Non-Conjugate Models to Inferences in Conjugate Models}

\aistatsauthor{ Mohammad Emtiyaz Khan \And Wu Lin}

\aistatsaddress{Center for Advanced Intelligence Project (AIP)\\ RIKEN, Tokyo \And Center for Advanced Intelligence Project (AIP) \\ RIKEN, Tokyo} ]

\begin{abstract}
   Variational inference is computationally challenging in models that contain both conjugate and non-conjugate terms.
Methods specifically designed for conjugate models, even though computationally efficient, find it difficult to deal with non-conjugate terms.
On the other hand, stochastic-gradient methods can handle the non-conjugate terms but they usually ignore the conjugate structure of the model which might result in slow convergence.
In this paper, we propose a new algorithm called Conjugate-computation Variational Inference (CVI) which brings the best of the two worlds together -- 
it uses conjugate computations for the conjugate terms and employs stochastic gradients for the rest.
We derive this algorithm by using a stochastic mirror-descent method in the mean-parameter space, and then expressing each gradient step as a variational inference in a conjugate model.
We demonstrate our algorithm's applicability to a large class of models and establish its convergence. 
Our experimental results show that our method converges much faster than the methods that ignore the conjugate structure of the model.

\end{abstract}

\section{Introduction}
In this paper, we focus on designing efficient variational inference algorithms for models that contain both conjugate and non-conjugate terms, e.g., models such as Gaussian process classification \citep{Kuss05}, correlated topic models \citep{blei2007correlated}, exponential-family Probabilistic PCA \citep{mohamed2009bayesian}, large-scale multi-class classification \citep{genkin2007large}, Kalman filters with non-Gaussian likelihoods \citep{rue2005gaussian}, and deep exponential-family models \citep{ranganath2015deep}.
Such models are widely used in machine learning and statistics, yet variational inference on them remains computationally challenging. 

The difficulty lies in the non-conjugate part of the model.
In the traditional Bayesian setting, when the prior distribution is conjugate to the likelihood, the posterior distribution is available in closed-form and can be obtained through simple computations.
For example, in a conjugate-exponential family, computation of the posterior distribution can be achieved by simply adding the sufficient statistics of the likelihood to the natural parameter of the prior.
In this paper, we refer to such computations as \emph{conjugate computations} (an example is included in the next section).

These types of conjugate computations have been used extensively in variational inference, primarily due to their computational efficiency.
For example, the variational message-passing (VMP) algorithm proposed by \cite{winn2005variational} uses conjugate computations within a message-passing framework.
Similarly, stochastic variational inference (SVI) builds upon VMP and enables large-scale inference by employing stochastic methods \citep{hoffman2013stochastic}.

Unfortunately, the computational efficiency of these methods is lost when the model contains non-conjugate terms.
For example, the messages in VMP lose their convenient exponential-family form and become more complex as the algorithm progresses.
Additional approximations for the non-conjugate terms can be used, e.g. those discussed by \cite{winn2005variational} and \cite{WangBlei}, but such approximations usually result in a performance loss \citep{honkela2004unsupervised,emtThesis}.
Other existing alternatives, such as the non-conjugate VMP method of \cite{knowles2011non} and the expectation-propagation method of \cite{Minka01b}, also require carefully designed quadrature methods to approximate the non-conjugate terms, and suffer from convergence problems and numerical issues.

Recently, many stochastic-gradient (SG) methods have been proposed to deal with this issue \citep{ranganath2013black, salimans2013fixed, titsias2014doubly}. 
An advantage of these approaches is that they can be used as a black-box and applied to a wide-variety of inference problems.
However, these methods usually do not directly exploit conjugacy, e.g., during the stochastic-gradient computation.
This might lead to several issues, e.g., their updates may depend on the parameterization of the variational distribution, the number of variational parameters might be too large, and the updates might converge slowly.

In this paper, we propose an algorithm which brings the best of the two worlds together -- it uses stochastic gradients for the non-conjugate terms, while retaining the computational efficiency of conjugate computations on the conjugate terms.
We call our approach Conjugate-computation Variational Inference (CVI).
Our main proposal is to use a stochastic \emph{mirror-descent  method} in the \emph{mean-parameter space} which differs from many existing methods that use a stochastic \emph{gradient-descent method} in the \emph{natural-parameter space}.
Our method has a natural advantage over these methods -- gradient steps in our method can be implemented by using conjugate computations.

We demonstrate our approach on two classes of non-conjugate models. The first class contains models which can be split into a conjugate part and a non-conjugate part. For such models our gradient steps can be expressed as a Bayesian inference in a conjugate model. The second class of models additionally allows conditionally-conjugate terms. For this model class, our gradient steps can be written as a message passing algorithm where VMP or SVI is used for the
conjugate part while stochastic gradients are employed for the rest.
Our algorithm conveniently reduces to VMP when the model is conjugate.

We also prove convergence of our algorithm and establish its connections to many existing approaches. We apply our algorithm to many existing models and demonstrate that our updates can be implemented using variational inference in conjugate models. Empirical results on many models and datasets show that our method converges much faster than the methods that ignore the conjugate structure of the model. The code to reproduce results of this paper is available at {\footnotesize \bf \url{https://github.com/emtiyaz/cvi/}}.


%

\section{Conjugate Computations} \label{sec:conj}
Given a probabilistic graphical model $p(\vy,\vz)$ with $\vy$ as the vector of observed variables and $\vz$ as the vector of latent variables, our goal in variational inference is to estimate a posterior distribution $p(\vz|\vy)$.
When the prior distribution $p(\vz)$ is \emph{conjugate}\footnote{A prior distribution is said to be conjugate when it takes the same functional form as the likelihood. An exact definition is given in Appendix \ref{conj:def}.} to the likelihood $p(\vy|\vz)$, the posterior distribution is available in closed form and can be obtained through simple computations which we refer to as the \emph{conjugate computations}.
For example, consider the following exponential-family prior distribution:$ p(\vz) =  h(\vz)\exp\sqr{\ang{\vphi(\vz), \veta} - A(\veta)}$,
where $\veta$ is the natural parameter, $\vphi$ is the sufficient statistics,
$\ang{\cdot,\cdot}$ is an inner product, $h(\vz)$ is the base measure, and 
$A(\veta)$ is the log-partition function. 
When the likelihood is conjugate to the prior, we can express the likelihood in the same form as the prior with respect to $\vz$, as shown below:
\begin{align}
   p(\vy|\vz) &= \exp\sqr{\ang{\vphi(\vz), \veta_{yz}(\vy)} -f_y(\vy)} , \label{eq:conjexp1}
\end{align}
where $\veta_{yz}$ and $f_y$ are functions that depend on $\vy$ only.
In such cases, the posterior distribution takes the same exponential form as $p(\vz)$ and its natural parameter can be obtained by simply adding the natural parameters $\veta$ of the prior to the function $\veta_{yz}(\vy)$ of the likelihood:
\begin{align}
  p(\vz|\vy) &\propto h(\vz) \exp\sqr{\ang{\vphi(\vz), \veta + \veta_{yz}(\vy)}} . \label{eq:conjexp3}
\end{align}
This is a type of conjugate computation. Such conjugate computations are extensively used in Bayesian inference for conjugate models, as well as in variational inference for conditionally-conjugate models in algorithms such as variational message passing \citep{winn2005variational} and expectation propagation \citep{Minka01b}.

\section{Non-Conjugate Variational Inference} \label{sec:ncvi}
When the model also contains non-conjugate terms, variational inference becomes computationally challenging. In variational inference, we obtain a fixed-form variational approximation $q(\vz|\vlambda)$, where $\vlambda$ is the variational parameter, by maximizing a lower bound to the marginal likelihood:
\begin{align}
   \max_{\lambda\in\Omega} \elbofinal(\vlambda) := \myexpect_q [\log p(\vy,\vz) - \log q(\vz|\vlambda)] ,
  \label{eq:lb}
\end{align}
where $\Omega$ is the set of valid variational parameters.
Non-conjugate terms might make the lower-bound optimization challenging, e.g., by making it intractable. For example, Gaussian Process (GP) models usually employ a non-Gaussian likelihood and the variational lower bound becomes intractable, as discussed below.
\begin{siderules}
   {\bf GP Example: } Consider a GP model for $N$ input-output pairs $\{y_n,\vx_n\}$ indexed by $n$.
   Let $z_n := f(\vx_n)$ be the latent function drawn from a GP with mean 0 and covariance $\vK$. Given $z_n$, we use a non-Gaussian likelihood $p(y_n|z_n)$ to model the output $y_n$.
   The joint distribution is shown below:
   \begin{align}
   p(\vy,\vz) = \sqr{\prod_{n=1}^N p(y_n|z_n)} \gauss(\vz|0,\vK) .
   \label{eq:gp_joint}
   \end{align}
   It is a common practice to approximate the posterior distribution by a Gaussian distribution $q(\vz|\vlambda) := \gauss(\vz|\vm,\vV)$ whose  mean $\vm$ and covariance $\vV$ we need to estimate \citep{Kuss05}.
   By substituting the joint-distribution \eqref{eq:gp_joint} in the lower bound \eqref{eq:lb}, we get the following lower bound:
   \begin{align}
      \sum_n \myexpect_q\sqr {\log p(y_n|z_n) } - \dkls{}{q(\vz|\vlambda)}{\gauss(\vz|0,\vK)} ,
      \label{eq:lb_gp}
   \end{align}
   where $\mathbb{D}_{KL}$ is the Kullback-Leibler divergence.
   This lower bound is intractable for most \emph{non-Gaussian} likelihoods because the expectation $\myexpect_q\sqr{\log p(y_n|z_n)}$ usually does not have a closed-form expression, e.g., when $p(y_n|z_n)$ is a logistic or probit likelihood. 
\end{siderules}

Despite its intractability, the lower bound can still be optimized by using a stochastic-gradient method, e.g., the following stochastic-gradient descent (SGD) algorithm:
\begin{align}
   \vlambda_{t+1} = \vlambda_t + \rho_t \widehat{\nabla}_{\lambda} \elbofinal (\vlambda_t) ,
   \label{eq:sgd}
\end{align}
where $t$ is the iteration number, $\rho_t$ is a step size, and $\widehat{\nabla}_{\lambda} \elbofinal (\vlambda_t) := \widehat{ \partial \elbofinal} / \partial \vlambda$ is a stochastic gradient of the lower bound at $\vlambda = \vlambda_t$.
The advantage of this approach is that it can be used as a black-box method and applied to a wide-variety of inference problems \citep{ranganath2013black}. 

Despite its generality and scalability, there are major issues with the SG method.
The conjugate terms in the lower bound might have a closed-form expression and may not require any stochastic approximations.
A naive application of the SG method might ignore this.
Another issue is that the efficiency and rate of convergence might depend on the parameterization used for the variational distribution $q(\vz|\vlambda)$. Some parameterizations might have simpler updates than others but it is usually not clear how to find the best one for a given model. We discuss these issues below for the GP example.
\begin{siderules}
   {\bf GP Example (issues with SGD):} 
   In GP models, it might seem that the number of variational parameters should be in $O(N^2)$, e.g., if we use $\{\vm,\vV\}$.
   However, as \cite{Opper:09} show, there are only $O(N)$ free parameters.
   Therefore, choosing a naive parameterization might be an order of magnitude slower than the best option (see Appendix \ref{sec:issueswithsgd} for more details on the inefficiency of SGD).
   In fact, as shown in \cite{Khan12nips}, choosing a good parameterization is a difficult problem in this case and a naive selection might make the problem more difficult than it needs to be.
\end{siderules}
Our algorithm, derived in the next sections, does not suffer from such issues, e.g., for the GP example our algorithm will conveniently express each gradient step as predictions in a GP regression model which naturally has an $O(N)$ number of free parameters. 
In the results section, we will see that this results in a fast convergent algorithm.

\section{Conjugate-Computation Variational Inference (CVI)}
\label{sec:cvi1}
We now derive the CVI algorithm that uses stochastic gradients for the non-conjugate terms, while retaining the computational efficiency of conjugate computations for the conjugate terms.
We will use a stochastic \emph{mirror-descent  method} in the \emph{mean-parameter space} and show that its gradient steps can be implemented by using conjugate computations. 
This will fix the issues of stochastic-gradient methods but maintain the computational efficiency of conjugate computations.

Our approach relies on the following two assumptions:

{\bf Assumption 1 [minimality] :} \emph{The variational distribution $q(\vz|\vlambda)$ is a ``minimal" exponential-family distribution:
\begin{align}
   q(\vz|\vlambda) =  h(\vz)\exp\crl{\ang{\vphi(\vz), \vlambda} - A(\vlambda)} ,
   \label{eq:approx_post}
\end{align}
with $\vlambda$ as its natural parameters.
}

The minimal\footnote{A summary of exponential family is given in Appendix \ref{app:basics}.} representation implies that there is a one to one mapping between the mean parameter $\vmu := \myexpect_q [\vphi(\vz)]$ and the natural parameter $\vlambda$.
Therefore, we can express the lower-bound optimization as a maximization problem over $\vmu \in \mathcal{M}$, where $\mathcal{M}$ is the set of valid mean parameters.
We denote the new objective function by $\widetilde{\mathcal{L}}(\vmu) := \elbofinal(\vlambda)$.

{\bf Assumption 2 [conjugacy] :} \emph{We assume that the joint distribution contains some terms, collectively denoted by $\tp_c$, which take the same form as $q$ with respect to $\vz$, i.e.,
\begin{align}
   \tp_c(\vy,\vz) \propto h(\vz)\exp\crl{\ang{\vphi(\vz), \veta}},
\end{align}
where $\veta$ is a known parameter vector.
We call $\tp_c$ as the conjugate part of the model.
We denote the non-conjugate terms by $\tp_{nc}$ giving us the following partitioning of the joint distribution: $p(\vy,\vz) \propto \tp_{nc}(\vy,\vz) \tp_c(\vy,\vz)$. These terms can be unnormalized with respect to $\vz$.
}

We can always satisfy this assumption, e.g., by trivially setting $\veta=0$ and $\tp_{nc} = p(\vy,\vz)/h(\vz)$. However, since there is no conjugacy in this formulation, our algorithm might not be able to gain additional computational efficiency over the SG methods.


We now derive the CVI algorithm. We build upon an equivalent formulation of \eqref{eq:sgd} which expresses the gradient step as the maximization of a local approximation: 
\begin{align}
   \vlambda_{t+1} = \arg\max_{\lambda \in \Omega} \left\langle \vlambda, \widehat{\nabla}_{\lambda} \elbofinal (\vlambda_t) \right\rangle 
   - \frac{1}{2 \rho_t} \| \vlambda - \vlambda_t \|_2^2 ,
   \label{eq:sgd2}
\end{align}
where $\|\cdot\|_2$ is the Euclidean norm and $\Omega$ is the set of valid natural parameters. 
By taking the derivative and setting it to zero, we recover the SGD update of \eqref{eq:sgd} which establishes the equivalence.

Instead of using the above SGD update in the natural-parameter space, we propose to use a stochastic mirror-descent update in the mean-parameter space. The mirror-descent algorithm \citep{nemirovskii1983problem} replaces the Euclidean geometry in \eqref{eq:sgd2} by a \emph{proximity} function such as a Bregman divergence \citep{raskutti2015information}. 
%
We propose the following mirror-descent algorithm:
\begin{align}
   \vmu_{t+1} = \arg\max_{\mu \in \mathcal{M}} \left\langle \vmu, \widehat{\nabla}_{\mu} \widetilde{\elbofinal} (\vmu_t) \right\rangle 
   - \frac{1}{\beta_t} \mathbb{B}_{A^*}(\vmu \| \vmu_t) ,
   \label{eq:prox}
\end{align}
where $A^*(\vmu)$ is the convex-conjugate\footnote{Definitions of convex-conjugate and Bregman divergence is given in Appendix \ref{app:basics}.} of the log-partition function $A(\vlambda)$, $\mathbb{B}_{A^*}$ is the Bregman divergence defined by $A^*$ over $\mathcal{M}$, and $\beta_t>0$ is the step-size.
The following theorem establishes that \eqref{eq:prox} can be implemented by using a Bayesian inference in a conjugate model.

\begin{thm}
   \label{thm:main}
   Under Assumption 1 and 2, the update \eqref{eq:prox} is equivalent to the Bayesian inference in the following conjugate model:
   \begin{align}
      q(\vz|\vlambda_{t+1}) \,\, \propto \,\,  e^{\ang{\boldsymbol{\phi}(\mathbf{z}), \widetilde{\boldsymbol{\lambda}}_t }} \,  \tp_c(\vy,\vz) , 
       \label{eq:update_exp1}
   \end{align}
   whose natural parameter can obtained by conjugate computation: $\vlambda_{t+1} = \tvlambda_t + \veta$ where $\tvlambda_t$ is the natural parameter of the exponential-family approximation to $\tp_{nc}$ and can be obtained recursively as follows:
   \begin{align}
      \tvlambda_t = (1-\beta_t) \tvlambda_{t-1} + \beta_t \, \,   \widehat{\nabla}_{\mu} \myexpect_{q} \sqr{ \log \tp_{nc}}  \rvert_{\mu=\mu_t}, \label{eq:recursion}
   \end{align}
   with $\vlambda_0 = 0$ and $\vlambda_1 = \veta$.
\end{thm}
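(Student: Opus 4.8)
The plan is to solve the proximal problem \eqref{eq:prox} through its first-order optimality condition and then to rewrite that condition as an additive update in the natural parameters. First I would invoke Assumption~1: minimality guarantees that $A$ is strictly convex and that the gradient map $\vlambda \mapsto \nabla A(\vlambda) = \vmu$ is a bijection with inverse $\nabla A^*$, so that $\nabla A^*(\vmu_t) = \vlambda_t$. Differentiating the Bregman divergence gives $\nabla_\mu \mathbb{B}_{A^*}(\vmu\|\vmu_t) = \nabla A^*(\vmu) - \nabla A^*(\vmu_t)$, which equals $\vlambda - \vlambda_t$ at the point whose natural parameter is $\vlambda$. Setting the gradient of the objective in \eqref{eq:prox} to zero therefore yields
\begin{align}
   \vlambda_{t+1} = \vlambda_t + \beta_t\,\widehat{\nabla}_\mu \widetilde{\elbofinal}(\vmu_t),
   \label{eq:natupdate}
\end{align}
so the mirror-descent step in $\mathcal{M}$ collapses to a plain additive step in the natural-parameter space.

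Next I would compute $\nabla_\mu \widetilde{\elbofinal}(\vmu)$ explicitly using the split $p(\vy,\vz) \propto \tp_{nc}(\vy,\vz)\,\tp_c(\vy,\vz)$ of Assumption~2. Expanding $\widetilde{\elbofinal}(\vmu) = \myexpect_q[\log\tp_{nc} + \log\tp_c - \log q]$ and substituting the exponential-family forms of $\tp_c$ and $q$, the base-measure terms $\log h(\vz)$ cancel between $\tp_c$ and $q$, the sufficient-statistics terms combine linearly as $\ang{\vmu,\veta}$ and $-\ang{\vmu,\vlambda}$, and the leftover $-\ang{\vmu,\vlambda}+A(\vlambda)$ is exactly $-A^*(\vmu)$ by convex duality at the dual point. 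Hence $\widetilde{\elbofinal}(\vmu) = \myexpect_q[\log\tp_{nc}] + \ang{\vmu,\veta} - A^*(\vmu) + \mathrm{const}$, and differentiating (again using $\nabla A^*(\vmu)=\vlambda$) gives
\begin{align}
   \nabla_\mu \widetilde{\elbofinal}(\vmu) = \nabla_\mu \myexpect_q[\log\tp_{nc}] + \veta - \vlambda.
   \label{eq:gradbound}
\end{align}
Only the intractable $\myexpect_q[\log\tp_{nc}]$ term needs a stochastic estimate, which is why the hat in \eqref{eq:recursion} sits on that term alone while $\veta$ and $\vlambda$ stay exact.

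Finally I would substitute \eqref{eq:gradbound} into \eqref{eq:natupdate}. The $-\vlambda_t$ contribution merges with the leading $\vlambda_t$ to produce a convex combination, giving $\vlambda_{t+1} = (1-\beta_t)\vlambda_t + \beta_t\veta + \beta_t\,\widehat{\nabla}_\mu\myexpect_q[\log\tp_{nc}]\rvert_{\mu_t}$. Defining $\tvlambda_t := \vlambda_{t+1} - \veta$ and using $\vlambda_t = \tvlambda_{t-1}+\veta$ immediately produces the stated recursion \eqref{eq:recursion}, while the identity $\vlambda_{t+1} = \tvlambda_t + \veta$ turns $q(\vz|\vlambda_{t+1})$ into $e^{\ang{\vphi(\vz),\tvlambda_t}}\,h(\vz)\,e^{\ang{\vphi(\vz),\veta}} \propto e^{\ang{\vphi(\vz),\tvlambda_t}}\tp_c(\vy,\vz)$, which is exactly \eqref{eq:update_exp1} and manifestly a conjugate computation. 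The initialization $\tvlambda_0 = 0$ (equivalently $\vlambda_1 = \veta$) is then simply the statement that the approximation to $\tp_{nc}$ starts empty, leaving the conjugate term alone.

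The step I expect to be the main obstacle is the clean handling of the convex-conjugate duality: verifying $\nabla A^*(\vmu)=\vlambda$ and confirming that the stationary point of \eqref{eq:prox} actually lies in the interior of $\mathcal{M}$, so that the unconstrained first-order condition genuinely solves the constrained $\arg\max$. This is precisely where minimality (Assumption~1) is indispensable, since without the bijection between $\vmu$ and $\vlambda$ the Bregman term would not reduce to $\vlambda-\vlambda_t$ and the reduction to a conjugate computation would break down. By contrast, the cancellations yielding \eqref{eq:gradbound} are routine once the duality is in place.
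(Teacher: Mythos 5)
Your proposal is correct, but it takes a genuinely different route from the paper's proof. The paper never writes down the stationarity condition in natural-parameter space; instead it (i) shows that the linearization of the conjugate part of the bound equals $\myexpect_q[\log(\tp_c/q)]+\myexpect_q[\log(q/q_t)]$ up to a constant (Lemma 1, proved via the Fisher-information chain rule $\nabla_\mu = \vC_\lambda^{-1}\nabla_\lambda$), (ii) identifies $\mathbb{B}_{A^*}(\vmu\|\vmu_t)$ with the KL divergence $\myexpect_q[\log(q/q_t)]$ (Lemma 2), and (iii) assembles the entire proximal objective into $\frac{1}{\beta_t}\myexpect_q\bigl[\log\bigl(\{e^{\ang{\vphi,\vg_t}}\tp_c\}^{\beta_t}q_t^{1-\beta_t}/q\bigr)\bigr]$, a negative KL whose maximizer is read off by matching $q$ to the tilted geometric mixture; the recursion then follows by unrolling (Lemmas 3--4). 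You instead solve the first-order condition directly, using $\nabla A^*(\vmu)=\vlambda$ to collapse the mirror step to the additive update $\vlambda_{t+1}=\vlambda_t+\beta_t\widehat{\nabla}_\mu\widetilde{\elbofinal}(\vmu_t)$, and you obtain the gradient of the conjugate part via the cleaner identity $\myexpect_q[\log(\tp_c/q)]=\ang{\vmu,\veta}-A^*(\vmu)+c$ rather than through the Fisher matrix. Both arguments need the same two facts (the conjugate part contributes $\veta-\vlambda$ to the gradient, and the Bregman term contributes $\vlambda-\vlambda_t$), so they are logically close; what your version buys is a shorter, more standard derivation that makes the ``mirror descent in $\mathcal{M}$ equals a plain gradient step in $\Omega$'' structure explicit, while the paper's version buys the intermediate representation $q_{t+1}\propto\{e^{\ang{\vphi,\vg_t}}\tp_c\}^{\beta_t}(q_t)^{1-\beta_t}$ as a geometric mixture and an explicit accounting of the base measure $h(\vz)$. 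You correctly flag the one point both arguments leave informal, namely that the stationary point must lie in the interior of $\mathcal{M}$ (equivalently $\tvlambda_t+\veta\in\Omega$) for the unconstrained optimality condition to solve the constrained problem; the paper's KL-matching step carries exactly the same implicit requirement, so this is not a gap relative to the paper.
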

A proof is given in Appendix \ref{app:proof_thm1}. The update \eqref{eq:update_exp1} replaces the non-conjugate term by an exponential-family approximation whose natural parameter $\tvlambda_t$ is a weighted sum of the gradients of the non-conjugate term $\tp_{nc}$. 
The resulting algorithm, which we refer to as the conjugate-computation variational inference (CVI) algorithm, is summarized in Algorithm \ref{alg:cvi1}. As desired, our algorithm computes stochastic-gradients only for the non-conjugate terms, as shown in Step \ref{st:project}. Given this gradient, in Step \ref{st:infer}, the new variational parameter is obtained by using a conjugate computation by simply adding the natural parameters.

Note that, even though we proposed an update in the mean-parameter space, conjugate computations in Step \ref{st:infer} are performed in the natural-parameter space. The mean parameter is required only during the computation of stochastic gradients in Step \ref{st:project}. For the GP example, these updates are conveniently expressed as predictions in a GP regression model which naturally has an $O(N)$ number of free parameters, as discussed next.

\begin{algorithm}[t]
\caption{CVI for exponential-family approximations.}
\label{alg:cvi1} 
\begin{algorithmic}[1]
    \STATE Initialize $\tvlambda_0 = 0$ and $\vlambda_1 = \veta$.
    \FOR{$t=1,2,3,\ldots$,}
      \STATE $\tvlambda_t = (1-\beta_t) \tvlambda_{t-1} + \beta_t \, \,   \widehat{\nabla}_{\mu} \myexpect_{q} \sqr{ \log \tp_{nc}}  \rvert_{\mu=\mu_t}$.
       \label{st:project}
       \STATE $\vlambda_{t+1} = \tvlambda_t + \veta$.

       \label{st:infer}
    \ENDFOR
\end{algorithmic}
\end{algorithm}

\begin{siderules}
   {\bf GP Example (CVI updates):} For the GP model, the non-conjugate part $\tp_{nc}$ is $\prod_n p(y_n|z_n)$. Both Assumption 1 and 2 are satisfied since $q$ is a Gaussian and the GP prior is conjugate to it.

   For Step \ref{st:project}, we need to compute the gradient with respect to $\vmu$. Since $\tp_{nc}$ factorizes, we can compute the gradient of each term $\myexpect_q [\log p(y_n|z_n)]$ separately. This term depends on the marginal $q(z_n)$ which has two mean parameters:
$\mu_n^{(1)} := m_n$ and $\mu_n^{(2)} := V_{nn} + m_n^2$,
where $V_{nn}$ is the $n$'th diagonal element of $\vV$. The gradients can be computed using the Monte Carlo (MC) approximation as shown by \cite{Opper:09} (details are given in Appendix \ref{sec:gp_grad}). Let's denote these gradients at iteration $t$ by $\hat{g}_{n,t}^{(1)}$ and $\hat{g}_{n,t}^{(2)}$. Using these, Step \ref{st:project} of Algorithm \ref{alg:cvi1} can be written as follows:
\begin{align}
   \tlambda_{n,t}^{(i)} = (1-\beta_t) \tlambda_{n,t-1}^{(i)} + \beta_t \hat{g}_{n,t}^{(i)} ,
\end{align}
for $i=1,2$ and $n = 1,2,\ldots,N$. These are the natural parameters for a Gaussian approximation of $p(y_n|z_n)$.
Using them in \eqref{eq:update_exp1} we obtain the following conjugate model:
\begin{align}
   q(\vz|\vlambda_{t+1}) \propto \sqr{\prod_{n=1}^N e^{z_n \tlambda_{n,t}^{(1)} + z_n^2 \tlambda_{n,t}^{(2)}} } \gauss(\vz|0,\vK) . \nonumber
\end{align}
This update can be done by using a conjugate computation which in this case corresponds to predictions in a GP regression model (since we only need to compute the mean parameter $\mu_n^{(1)}$ and $\mu_n^{(2)}$ for all $n$). We see that the only free parameters to be computed are $\tlambda_{n,t}^{(1)}$ and $\tlambda_{n,t}^{(2)}$, therefore the number of parameters is in $O(N)$. We naturally end up with the optimal number of parameters and avoid the computation of the full covariance matrix $\vV$. Both of these give a huge computational saving over a naive SGD method.
\end{siderules}
The previous example shows that variational inference in non-conjugate GP models can be done by solving a sequence of GP regression problems. In Appendix \ref{app:ex_cvi}, we give many more such examples. In Appendix \ref{app:glm}, we show that the variational inference in a generalized linear model (GLM) can be implemented by using Bayesian inference in linear regression model. Similarly, in Appendix \ref{app:kalman}, we show a Kalman filter model with GLM likelihood can be implemented by using Bayesian inference in the standard Kalman filter.
We also give examples involving Gamma variational distribution in Appendix \ref{app:gamma1}.

It is also possible to use a ``doubly" stochastic approximation \citep{titsias2014doubly} where, in addition to the MC approximation, we also randomly pick factors from both $\tp_{nc}$ and $\tp_c$. As discussed below, this will result in a huge reduction in computation in Step \ref{st:project} of Algorithm \ref{alg:cvi1}, e.g., bringing the number of stochastic-gradient computations to $O(1)$ from $O(N)$ in the GP example.

\begin{siderules}
   {\bf GP Example (doubly-stochastic CVI):} We can use a doubly-stochastic scheme over $\tp_{nc}$ since it factorizes over $n$. We sample one term (or pick a mini-batch) and compute its stochastic gradient. In our algorithm, this translates to modifying only the selected example's $\tlambda_{n,t}^{(i)}$. Denoting the selected sample index by $n_t$, this can be expressed as follows:
\begin{align}
   \tlambda_{n,t}^{(i)} = (1-\beta_t)  \tlambda_{n,t-1}^{(i)} + \delta_{n=n_t} \,\,  \beta_t  N\hat{g}_{n,t}^{(i)} \, ,
\end{align}
where $\delta_{n=n_t}$ is an indicator function which is 1 only when $n=n_t$.
The number of stochastic gradient computation is therefore in $O(1)$ instead of $O(N)$.
Computation is further reduced since at each iteration $t$ we only need to compute \emph{one} mean parameter corresponding to the marginal of $z_{n_{t}}$. This is much more efficient than a SGD update where we have to explicitly store $\vV_t$.
We can also reuse computations between iterations since updates in iteration $t$ differs from iteration $t-1$ only at one example $n=n_t$.
%
\end{siderules}

Another attractive property of our algorithm is that it can handle constraints on the variational parameters without much computational overhead, e.g., the covariance matrix in GP example will be positive definite as long as the Gaussian approximation of all non-conjugate term is valid. We can always make sure that this is the case by using rejection sampling inside the stochastic approximation.

One requirement for our algorithm is that we should be able to compute stochastic gradients with respect to the mean parameter.
For distributions such as Gaussian and Multinomial, this can be done analytically, e.g. see Appendix \ref{sec:gp_grad} for Gaussian distribution.
For other distributions, such as Gamma, this quantity might be difficult to compute directly.
We propose to build stochastic approximations by using the following identity:
\begin{align}
   \deriv{f}{\vmu} = \sqr{ \deriv{\vmu}{\vlambda} }^{-1} \deriv{f}{\vlambda} =  \vC_{\lambda}^{-1} \deriv{f}{\vlambda} ,
   \label{eq:grad_mu}
\end{align}
where $\vC_{\lambda}$ is the Fisher information matrix and $f$ is the function whose gradient we want to compute.
We compute stochastic approximations of $\vC_\lambda$ and $\partial f/\partial \vlambda$ separately, and then solve the equation to get the gradients. More details are given in Appendix \ref{app:gradmu}.
Our proposal is very similar to the one discussed by \cite{salimans2013fixed} where an approximation to $\vC_\lambda$ is obtained by averaging over iterations. 
The advantage of our proposal is that we do not have to explicitly store or form the Fisher information matrix, rather only solve a linear system.

The convergence of our algorithm is guaranteed under mild conditions discussed in \cite{khan2016faster}. The update \eqref{eq:prox} converges to a local optimum of the lower bound $\mathcal{L}(\vlambda)$ under the following conditions: $\widetilde{\elbofinal}(\vmu)$ is differentiable and its gradient is $L-$Lipschitz-continuous,
the stochastic approximation is unbiased  and has bounded variance, and the function $A^*(\vmu)$ is continuously differentiable and strongly convex with respect to the $L_2$ norm.
An exact statement of convergence is given in Proposition 3 of  \cite{khan2016faster}.

\section{CVI for Mean-Field Approximation}
\label{sec:cvi2}
We now extend our algorithm to models that also allow \emph{conditional-conjugacy}.
This class of models is bigger than the one considered in the previous section, however we will restrict the posterior approximation to a mean-field approximation which is a stricter assumption than the one used in the previous section.
The algorithm presented in this section is a generalization of VMP and SVI to non-conjugate models. We will see that the new algorithm differs only slightly from these previous algorithms and reduces to them when the model does not contain any non-conjugate terms.

Consider a Bayesian network over $\vx = \{\vy,\vz\}$ where $\vy$ is the vector of observed nodes $\vy_n$ for $n=1,2,\ldots,N$, and $\vz$ is the vector of latent nodes $\vz_i$ for $i=1,2,\ldots, M$. The joint distribution over $\vx$ is given as follows:
\begin{align}
    p(\vx) = \prod_{a=1}^{M+N} p(\vx_a|\vx_{\pa_a}) ,
    \label{eq:bayesnet}
\end{align}
where $\vx_{\pa_a}$ is the set of parent nodes for variable $\vx_a$.
We will refer to a term $p(\vx_a|\vx_{\pa_a})$ in $p(\vx)$ as the factor $a$.

Similar to the previous section, we make the following two assumptions for the CVI algorithm derived in this section.

{\bf Assumption 3 [mean-field + minimality] :} \emph{We assume that $q(\vz) = \prod_i q_i(\vz_i)$ with each factor being a minimal exponential-family distribution:
\begin{align}
   q_i(\vz_i|\vlambda_i) := h_i(\vz_i) \exp\sqr{ \ang{\vphi_i(\vz_i), \vlambda_i} - A_i(\vlambda_i)} .
   \label{eq:qi}
\end{align}
}
We denote the vector of mean parameters $\vmu_i$ by $\vmu$ and the vector of natural parameters $\vlambda_i$ by $\vlambda$. Due to minimality, we can rewrite the lower bound in terms of $\vmu$, for which we use the same notation $\widetilde{L}(\vmu)$ as in the previous section.

For the next assumption, we define $\mathbb{N}_i$ to be the set containing the node $\vz_i$ and all its children. We define $\vx_{/i}$ to be the set of all nodes $\vx$ except $\vz_i$. Similarly, given a factor $p(\vx_a|\vx_{pa_a})$ for node $\vx_a\in \mathbb{N}_i$, we define $\vx_{a/i}$ to be the set of all nodes in the set $\vx_a \union \vx_{pa_a}$ except $\vz_i$. 

{\bf Assumption 4 [conditional-conjugacy] :} \emph{For each node $\vz_i$, we can split the following conditional distribution into a conjugate and a non-conjugate term as shown below:
\begin{align}
   &p(\vz_i|\vx_{/i}) \propto \prod_{a\in \mathbb{N}_i} p(\vx_a|\vx_{pa_a}) \label{eq:ass4}\\
   &\propto\,\, h_i(\vz_i) \prod_{a\in\mathbb{N}_i} \tp_{nc}^{a,i}(\vz_i,\vx_{a/i}) \, e^{\crl{\ang{ \boldsymbol{\phi}_i(\mathbf{z}_i), \boldsymbol{\eta}_{a, i} (\mathbf{x}_{a/i}) } }} , \nonumber
\end{align}
where $\tp_{nc}^{a, i}$ is the non-conjugate part and $\veta_{a, i} (\vx_{a/i})$ is the natural parameter of the conjugate part for the factor $a$.
}

Similar to Assumption 2, we can always satisfy this assumption, but this may or may not guarantee the usefulness of our method. 

Similar to the previous section, we can reparameterize the lower bound in terms of $\vmu$ to define $\widetilde{\elbofinal}(\vmu) := \elbofinal(\vlambda)$ and then use the update \eqref{eq:prox}. Due to the mean-field approximation and linearity of the first term in \eqref{eq:prox}, we can conveniently rewrite the objective as a sum over all nodes $i$:
\begin{align}
   \max_\mu \sum_{i=1}^M \sqr{ \left\langle \vmu_i, \widehat{\nabla}_{\mu_i} \widetilde{\elbofinal} (\vmu_t) \right\rangle 
   - \frac{1}{\beta_t} \mathbb{B}_{A^*}(\vmu_i \| \vmu_{i,t}) } ,
   \label{eq:obj_mf}
\end{align}
where $\vmu_t$ and $\vmu_{i,t}$ denotes the value of $\vmu$ and $\vmu_i$, respectively, at iteration $t$.
Therefore, we can either optimize all $\vmu_i$ parallely or use a doubly-stochastic scheme by randomly picking a term in the sum.

The final algorithm is shown in Algorithm \ref{alg:cvi2} and a detailed derivation is given in Appendix \ref{app:cvi_mf}.
In Step \ref{st:vmp}, when combining all the messages received at a node $i$, the algorithm separates the conjugate computations from the non-conjugate ones.
The first set of messages $\tilde{\veta}_{a,i}$ (defined below) are obtained from the conjugate parts by taking the expectation over their natural parameters $\veta_{a,i}(\vx_{a/i})$:
\begin{align}
\tilde{\veta}_{a,i} := \myexpect_{q_{/i,t}} \sqr{ \veta_{a,i} (\vx_{a/i})} ,
\end{align}
where $q_{/i,t}$ is the variational distribution at iteration $t$ of all the nodes except $\vz_i$. By comparing the above to Equation 17 in \cite{winn2005variational}, we can see that this operation is equivalent to a message-passing step in the VMP algorithm.

The second set of messages (the second term inside the summation) in Step \ref{st:vmp} is simply the stochastic-gradient of the non-conjugate term in factor $a$. The two sets of messages are combined to get the resulting natural parameter. Finally, a convex combination is taken in Step \ref{st:update} to get the natural parameter of $q_{i,t+1}$. 

It is straightforward to see that in the absence of the second set of messages, our algorithm will reduce to VMP if we use a sequential or parallel updating scheme. 
However, an attractive property of our formulation is that we can also employ a doubly-stochastic update -- we can randomly sample terms from \eqref{eq:obj_mf}, weight them appropriately to get an unbiased stochastic approximation to the gradient, and then take a step. This will correspond to updating only a mini-batch of nodes in each iteration. 
If we use this type of updates on a conjugate-exponential model, our algorithm will be equivalent to SVI (given that we have local and global nodes and that we sample a local node followed by an update of the global node).


Convergence of Algorithm \ref{alg:cvi2} is assured under the same conditions discussed earlier.
Since the objective \eqref{eq:obj_mf} can be expressed as a sum over all the nodes, our method converges under both stochastic updates (e.g. SVI like updates) and parallel updates (e.g. with one step of VMP). 

\begin{algorithm}[t]
\caption{CVI for mean-field}
\label{alg:cvi2} 
\begin{algorithmic}[1]
   \STATE Initialize $\vlambda_{i,0}$.
    \FOR{$t=0,1,2,3,\ldots$,}
    \FOR{for all node $\vz_i$ (or a randomly sampled one)}
    \STATE $\tvlambda_{i,t} = \sum_{a\in \mathbb{N}_i} \sqr{ \tilde{\veta}_{ai} +  \widehat{\nabla}_{\mu_i} \myexpect_{q_t} (\log \tp_{nc}^{a,i}) \rvert_{\boldsymbol{\mu} = \boldsymbol{\mu}_t} }$
    \label{st:vmp}
    \STATE $\vlambda_{i,t+1} = (1-\beta_t) \vlambda_{i,t} + \beta_t\tvlambda_{i,t}$.
       \label{st:update}
       \ENDFOR
    \ENDFOR
\end{algorithmic}
\end{algorithm}

\section{Related Work}
One of the simplest method is to use local variational approximations to approximate the non-conjugate terms \citep{Jaakkola96b, Bouchard07, Khan10, WangBlei}.
Such approximations do not necessarily converge to a local maximum of the lower bound, leading to a performance loss \citep{Kuss05, marlin2011piecewise, knowles2011non,emtThesis}.
In contrast, our algorithm uses a stochastic-gradient step which is guaranteed to converge to the stationary point of the variational lower bound $\elbofinal$.

Another related approach is the Expectation-Propagation (EP) algorithm \citep{Minka01b}, which computes an exponential-family approximation (also called the site parameters) to the non-conjugate factors.
The site parameter is very similar to $\tvlambda_t$ in our algorithm, although our approximation is obtained by maximizing the lower bound unlike EP which uses moment matching.
EP suffers from numerical issues and requires damping to ensure convergence, while our method has convergence guarantees.

The Non-conjugate variational message-passing (NC-VMP) algorithm \citep{knowles2011non} is a variant of VMP for multinomial and binary regression. We can show that NC-VMP is a special case of our method under these conditions: gradients are exact and the step-size $\beta_t = 1$ (a formal proof is given in Appendix \ref{app:ncvmp}).
Therefore, our method is a stochastic version of NC-VMP with a principled way to do damping. \cite{knowles2011non} also used damping in their experiment, although it was used as a trick to make the method work.

Another related method is proposed by \cite{salimans2013fixed}. 
They view the optimality condition as an instance of a linear regression and propose a stochastic-optimization method to solve it.
This requires a stochastic estimate of the Fisher information matrix and the following gradient $\vg_\lambda := \nabla_\lambda \myexpect_{q} [\log p(\vy,\vz)]$ with respect to the natural parameter. Denoting these two quantities at iteration $t$ by $\widehat{\vC}_{\lambda,t}$ and $\widehat{\vg}_{\lambda,t}$, they do the following update: $\vlambda_{t+1} = \widehat{\vC}_{\lambda,t}^{-1} \widehat{\vg}_{\lambda,t}$. By comparing this update to \eqref{eq:grad_mu}, we can see that
the quantity in the right hand side of this update is similar to the gradient with respect to $\vmu$.
However, in their method, the two stochastic gradients are maintained and averaged over iterations.
Therefore, they need to store the Fisher information matrix explicitly which might be infeasible when the number of variational parameters is large (e.g. the GP model).
We do not have this problem, because these gradients are required only when the gradient with respect to $\vmu$ is not easy to compute directly, and can be computed on the fly at every iteration. 

Our method is closely related the two existing works by \cite{Khan15nips} and \cite{khan2016faster} which use proximal-gradient methods for variational inference.
Both of these works propose a splitting of the lower bound which is then optimized by using proximal-gradient methods in the natural-parameter space. Their update however does not directly correspond to an update in conjugate models, even though sometimes they can be obtained in closed-form.
In contrast, we propose mirror-descent \emph{without} any splitting and still obtain a closed-form update. In addition, our update corresponds to an update in a conjugate model.
The key idea behind is to optimize in the mean-parameter space.
\cite{khan2016faster} speculate that their method could be generalized to a larger class of models. Our method fills this gap and derives a generalization to exponential-family models.
Overall, our method is a sub-class of the proximal-gradient methods discussed in \cite{khan2016faster}, but it provides a simpler way of applying it to non-conjugate exponential-family models.

\section{Results}
We present results on the following four models: Bayesian logistic regression, gamma factor model, gamma matrix factorization, and Gaussian process (GP) classification.
Due to space constraints, details of the datasets are given in Appendix \ref{app:dataset}.
Additional results on Bayesian logistic regression and GP classification are in Appendix \ref{app:add}.
\begin{figure*}[t]
\center
\subfigure[Bayesian Logistic Regression with $N>D$]{\includegraphics[width = 2.1in]{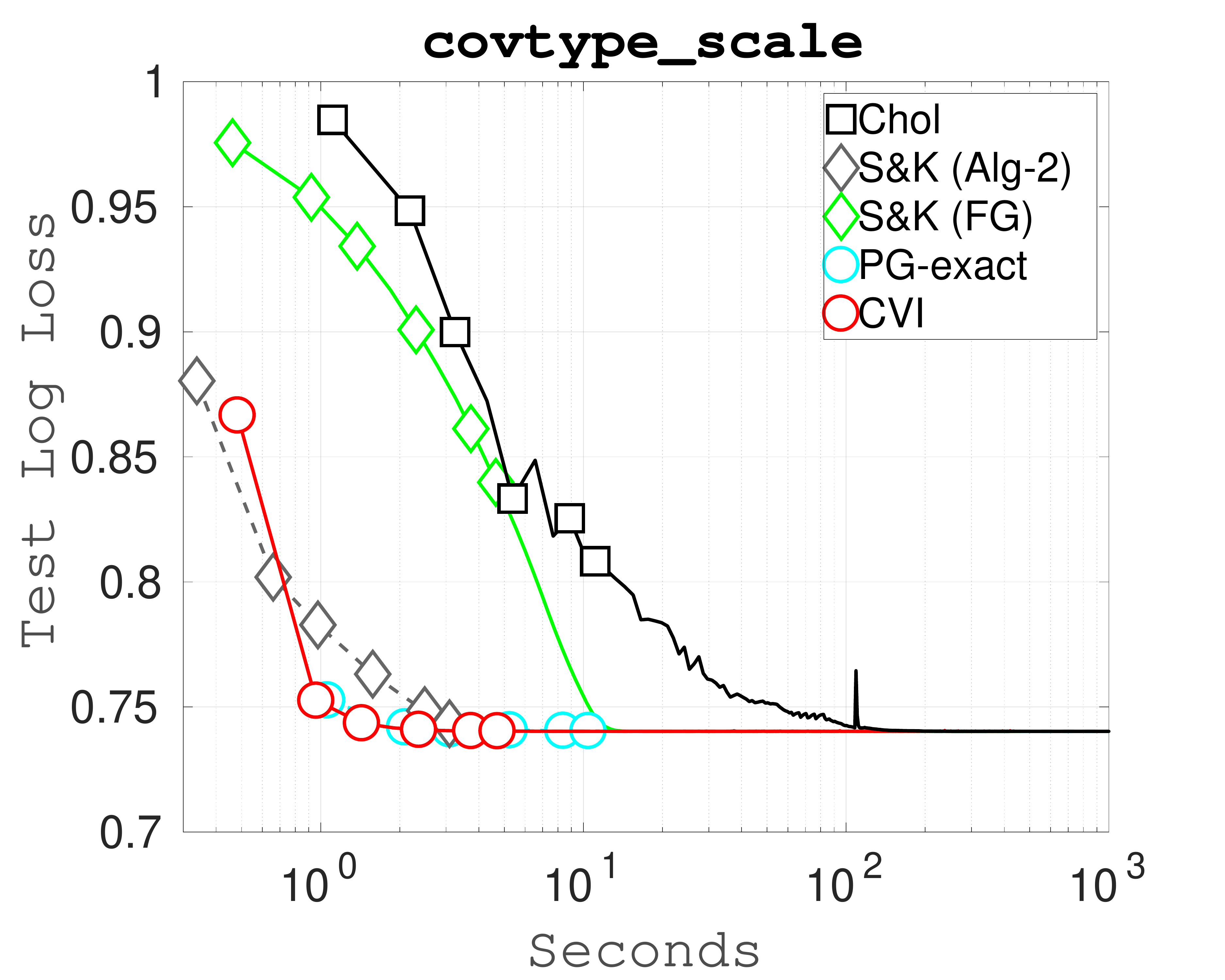}
\includegraphics[width = 2.1in]{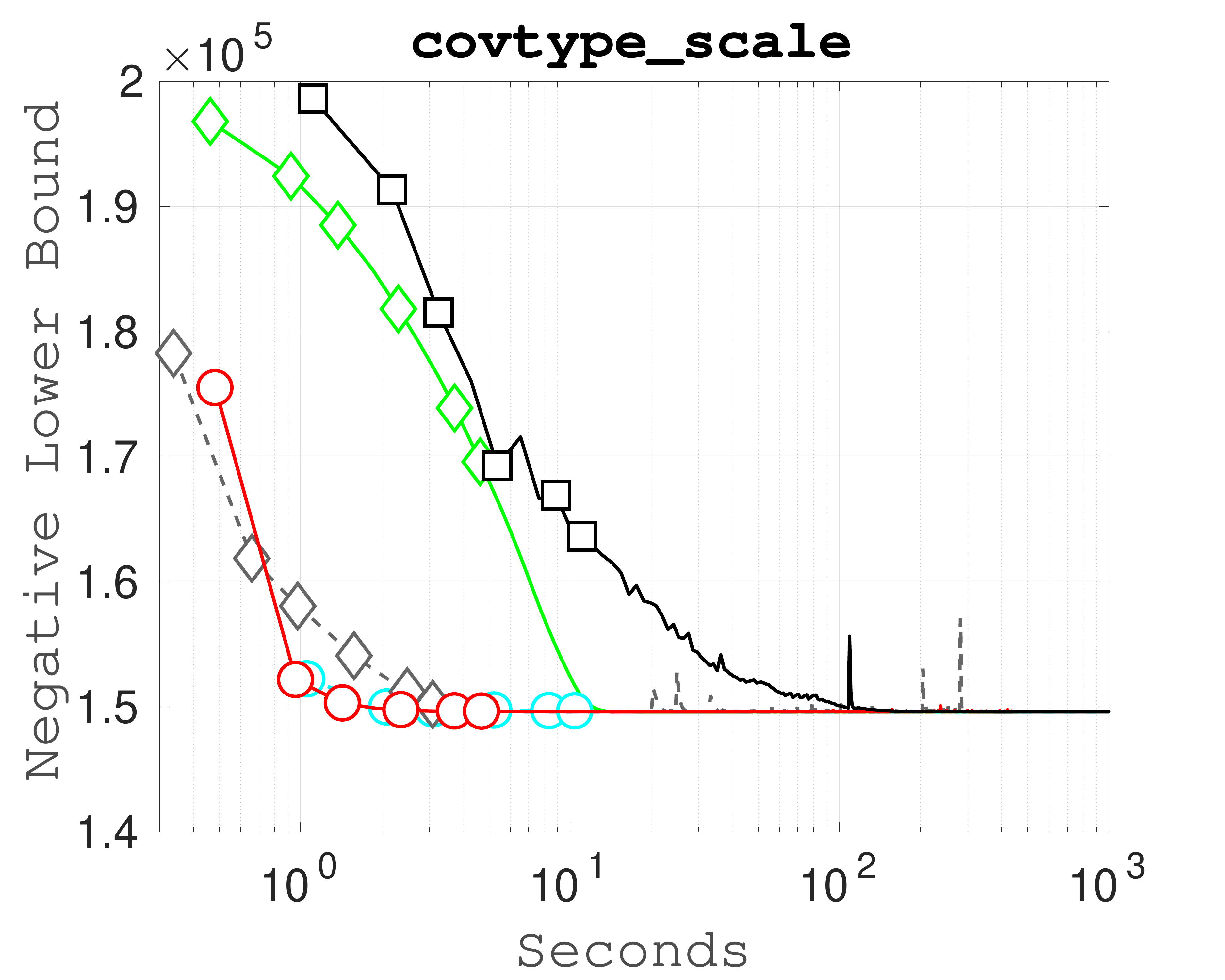}
}
\subfigure[Gamma Factor Model]{\includegraphics[width = 2.1in]{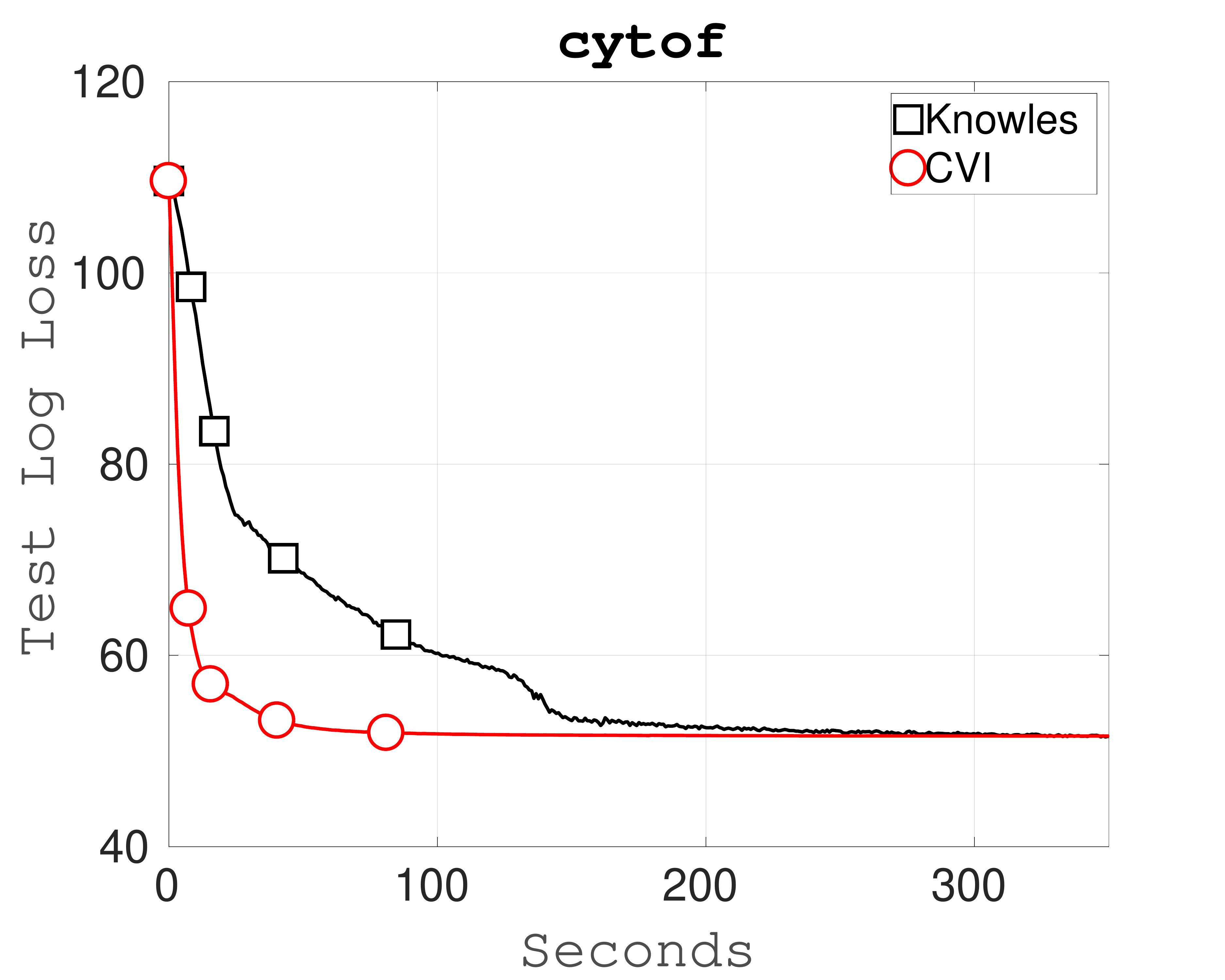}}
\vfill
\subfigure[Bayesian Logistic Regression with $D>N$]{\includegraphics[width = 2.1in]{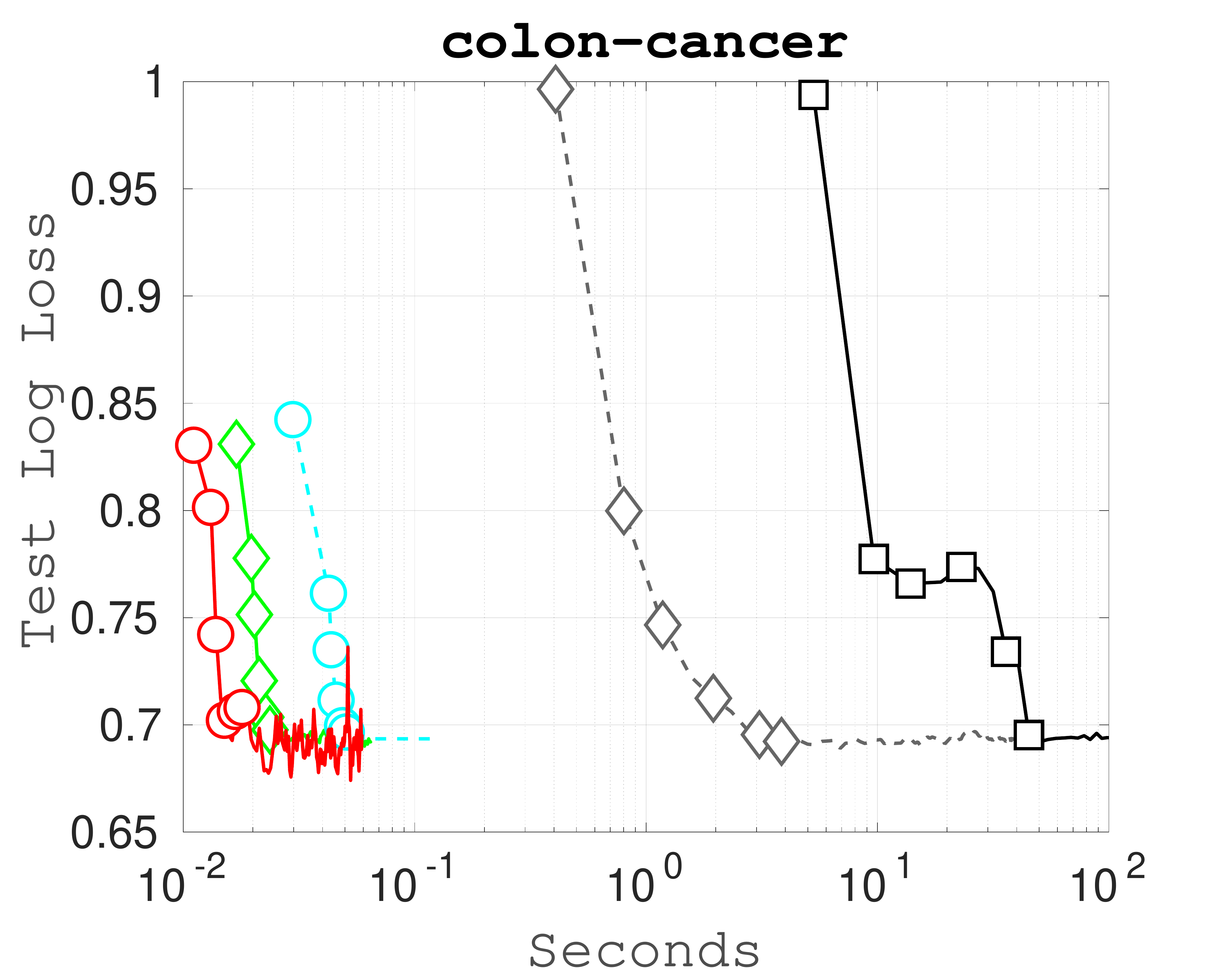} \includegraphics[width = 2.1in]{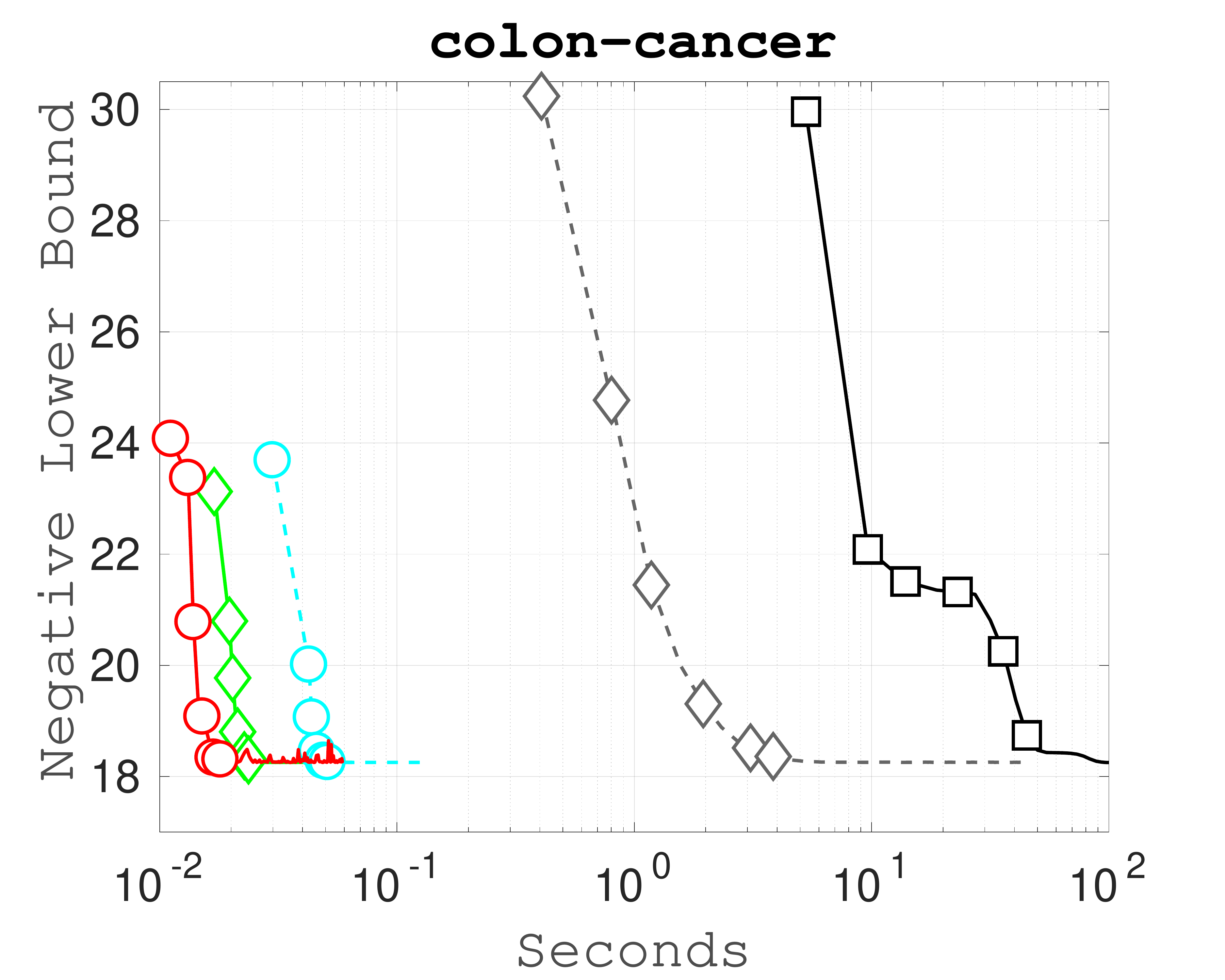}
}
\subfigure[Poisson-gamma Matrix Factorization]{\includegraphics[width = 2.1in]{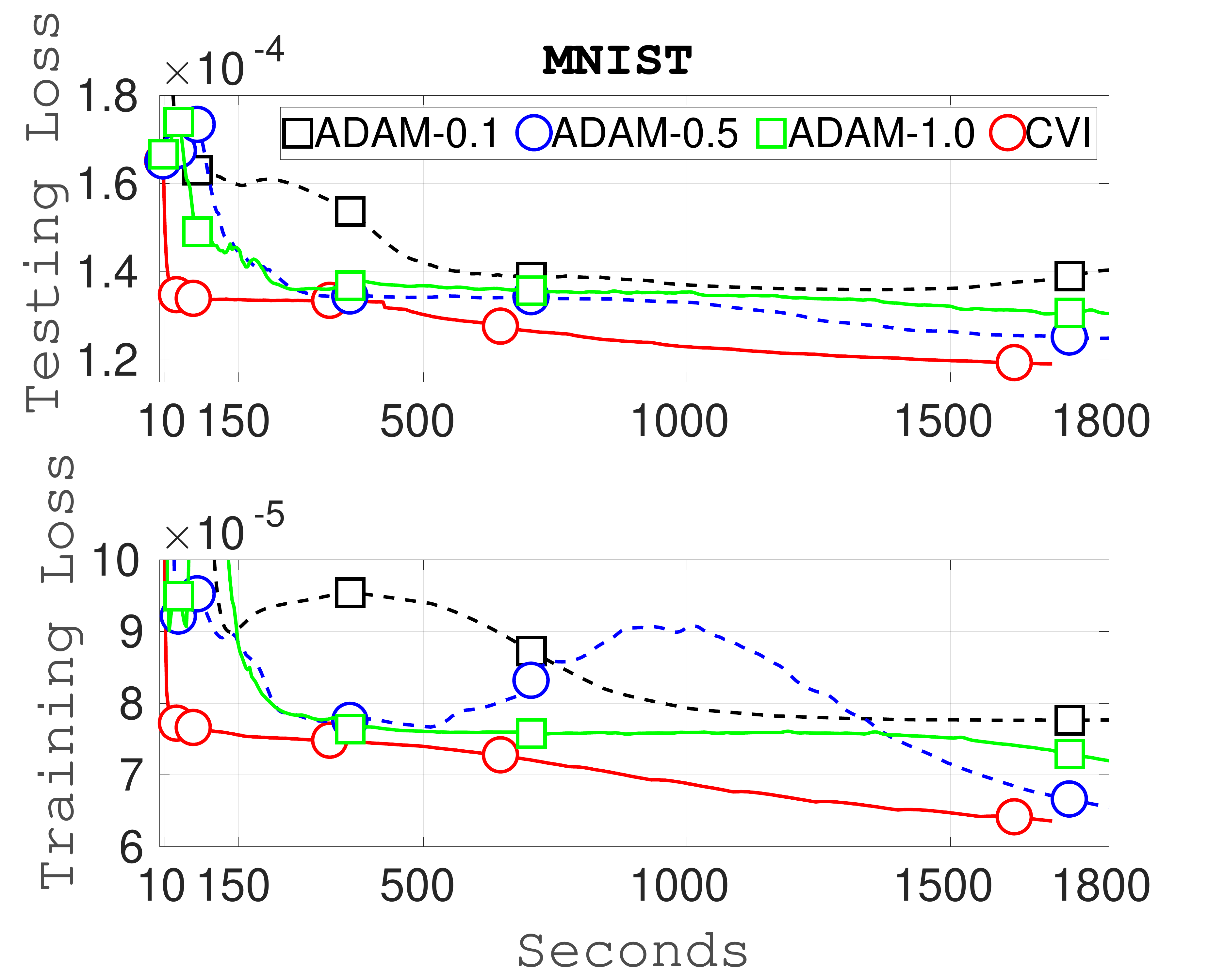}}
\caption{Comparison on Bayesian logistic regression, gamma factor model, and Poisson-gamma matrix factorization.}
\label{fig:compare}
\end{figure*}

We first discuss results for Bayesian logistic regression.
We compare our method to the following four methods: explicit optimization with LBFGS method using Cholesky factorization and exact gradients (`Chol'), proximal-gradient algorithm of \cite{Khan15nips} (PG-exact), Algorithm 2 of \cite{salimans2013fixed} (`S\&K-Alg2'), and Factorized-Gradient method of \cite{salimans2013fixed} (`S\&K-FG').

The `Chol' method does not exploit the structure of the problem and we expect it to be slow.
The `S\&K-FG' works better than `S\&K-Alg2' when $D>N$ where $N$ is the number of examples and $D$ is the dimensionality. 
This is because S\&K-Alg2 maintains an estimate of the Fisher information matrix which slows it down when $D$ is large.
However, the order is switched when $N>D$ with S\&K-Alg2 performing better than S\&K-FG. For these two methods we use the code provided by the authors.

Settings of various algorithmic parameters for these algorithms is given in Appendix \ref{app:algo_details}.
For our method, we use CVI algorithm with stochastic gradient obtained using Monte Carlo (`CVI'). Comparison to our algorithm with exact gradients is given included in Appendix \ref{app:algo_details}. Details of CVI update are given in Appendix \ref{app:glm}.

We compare the negative of the lower bound on the training set and the log-loss on the test set. The latter is computed as follows: given a test data with label $y_n \in \{0,1\}$ and the estimate of $\hat{p}_n := \myexpect_q[p(y_n =1|\vz)]$ obtained by using a method, the log-loss is equal to the negative of $ [y_n \log_2 \hat{p}_n + (1-y_n) \log_2 (1-\hat{p}_n) ] $. We report the average of the log-loss over all test points $y_n$. 
A value of 1 is equal to the performance of a random coin-toss, and any value lower than that is an improvement.

Figure (a) shows results on the `covtype' dataset with $N=581$K and $D=54$.
The markers are drawn after iteration 1, 2, 3, 5, 8, and 10.  
We use 290K examples for training and leave the rest for testing.
Chol is slowest as expected.
Since $N>D$ for this dataset, S\&K-Alg2 is faster than S\&K-FG. CVI is as fast as S\&K-Alg2 and PG-exact.

Figure (c) shows results on the Colon-Cancer dataset where $D>N$ ($N=62$ and $D=2000$). We use 31 observations for training. 
The markers are drawn after iteration 1, 2, 3, 5, 8, and 10.
We can see that the S\&K-Alg2 is much slower now, while S\&K-FG is much faster. This is because the former computes and stores an estimate of the Fisher information matrix, which slows it down.
In our approach, we completely avoid this computation and directly estimate the gradient of the mean (using Appendix \ref{sec:gp_grad}), that is why we are as fast as S\&K-FG.

Overall, we see that CVI works well in both $N>D$ and $D>N$ regimes. The main reason behing is that CVI has closed-form updates which enables the application of matrix-inversion lemma to perform fast conjugate computations (Bayesian linear regression in this case). 

Figure (b) shows the results for the gamma factor model discussed in \cite{knowles2015stochastic} (with $N=522$K and $D=40$). Details of the model are given in Appendix \ref{sec:gamma2}.
We compare CVI to the algorithm proposed by \cite{knowles2015stochastic}.
We use a constant step-size for CVI.
For stochastic gradient computations, we use the implementation provided by the author.
We use 40 latent factors and a fixed noise variance of 0.1. 
We compare the perplexity (average negative log-likelihood over test data points) using 2000 MC samples.
Markers are shown after iteration 0, 9, 19, 49, and 99.
Our method converges faster than the baseline, while achieving the same error.

Finally, Figure (d) shows the results for the Poisson-gamma matrix factorization model \citep{ranganath2015deep}. Details are in Appendix \ref{app:def}. We use the MNIST dataset of 70K images, each with 784 pixels. We use the provided train and test split, where 60K images are used for training. We use 100 latent factors and fixed model hyper-parameters.
For the baseline, we use ADAM with the following transformation $\vlambda = \log(1+\exp(\vlambda'))$ to satisfy the positivity constraint. We use many initial step-sizes for ADAM shown in Figure (d) (see `ADAM-x' where x denotes the initial step-size). For CVI algorithm, a constant step size is used. For CVI, stochastic gradients sometimes violate the constraints (because of the reparameterization trick). To deal with this, we shrink the step size such that the steps are just within the constraints  (similar to a method used in
\cite{KhanAFS13}). Computation of stochastic gradients is based on the method of \cite{knowles2015stochastic} and is the same for both methods. 
We report the following reconstruction loss: $\sqrt{\Sigma_{ij} (y_{ij} - \hat{y}_{ij} )^2 }/(V\times N)$
where $N$ denotes the number of images and $V$ is the number of pixels. 
Markers are drawn after iteration 50, 100, 500, 1000, and 2500.
Our method converges faster than the baseline and also achieves a lower error.

\section{Conclusions}
In this paper, we proposed a new algorithm called the Conjugate-computation Variation Inference (CVI) algorithm for variational inference for models that contain both conjugate and non-conjugate terms.
Our method is based on a stochastic mirror descent method in mean-parameter space. 
Each gradient step of our method can be expressed as a variational inference in a conjugate model.
This leads to computationally efficient algorithm where stochastic gradients are employed only for the non-conjugate parts of the model, while using conjugate computations for the rest.
Overall, CVI provides a general, modular, computationally efficient, scalable, and convergent approach for variational inference in non-conjugate models.
CVI is a generalization of VMP and SVI to non-conjugate models.

Our method might be useful in simplifying inference in deep generative models. For example, \cite{johnson2016composing} propose a similar algorithm for graphical models with neural-network based observation-likelihoods.
Their method does not easily generalize to models containing arbitrary conjugacy structure.
Another issue is that inference in the conditionally-conjugate part need to be run until convergence (or to a sufficient decrease in the lower bound) before updating the non-conjugate part.
Our method can be useful in fixing these issues.
Similar examples are discused in \cite{krishnan2015deep} and \cite{archer2015black} where our method can be useful for inference.

{\bf Acknowledgments:} We would like to thank the anonymous reviewers for their feedback. Part of this work was done when MEK was a scientist in EPFL (Switzerland) and WL was a freelance researcher in Toronto (Canada).

\bibliographystyle{apalike}
\bibliography{paper}

\begin{thebibliography}{}

\bibitem[Archer et~al., 2015]{archer2015black}
Archer, E., Park, I.~M., Buesing, L., Cunningham, J., and Paninski, L. (2015).
\newblock Black box variational inference for state space models.
\newblock {\em arXiv preprint arXiv:1511.07367}.

\bibitem[Blei and Lafferty, 2007]{blei2007correlated}
Blei, D.~M. and Lafferty, J.~D. (2007).
\newblock A correlated topic model of science.
\newblock {\em The Annals of Applied Statistics}, pages 17--35.

\bibitem[Bouchard, 2007]{Bouchard07}
Bouchard, G. (2007).
\newblock Efficient bounds for the softmax and applications to approximate
  inference in hybrid models.
\newblock In {\em NIPS 2007 Workshop on Approximate Inference in Hybrid
  Models}.

\bibitem[Gelman et~al., 2014]{gelman2014bayesian}
Gelman, A., Carlin, J.~B., Stern, H.~S., and Rubin, D.~B. (2014).
\newblock {\em Bayesian data analysis}, volume~2.
\newblock Chapman \& Hall/CRC Boca Raton, FL, USA.

\bibitem[Genkin et~al., 2007]{genkin2007large}
Genkin, A., Lewis, D.~D., and Madigan, D. (2007).
\newblock Large-scale bayesian logistic regression for text categorization.
\newblock {\em Technometrics}, 49(3):291--304.

\bibitem[Hoffman et~al., 2013]{hoffman2013stochastic}
Hoffman, M.~D., Blei, D.~M., Wang, C., and Paisley, J. (2013).
\newblock Stochastic variational inference.
\newblock {\em The Journal of Machine Learning Research}, 14(1):1303--1347.

\bibitem[Honkela and Valpola, 2004]{honkela2004unsupervised}
Honkela, A. and Valpola, H. (2004).
\newblock Unsupervised variational {B}ayesian learning of nonlinear models.
\newblock In {\em Advances in Neural Information Processing Systems}, pages
  593--600.

\bibitem[Jaakkola and Jordan, 1996]{Jaakkola96b}
Jaakkola, T. and Jordan, M. (1996).
\newblock A variational approach to {B}ayesian logistic regression problems and
  their extensions.
\newblock In {\em International conference on Artificial Intelligence and
  Statistics}.

\bibitem[Johnson et~al., 2016]{johnson2016composing}
Johnson, M., Duvenaud, D.~K., Wiltschko, A., Adams, R.~P., and Datta, S.~R.
  (2016).
\newblock Composing graphical models with neural networks for structured
  representations and fast inference.
\newblock In {\em Advances in Neural Information Processing Systems}, pages
  2946--2954.

\bibitem[Khan, 2012]{emtThesis}
Khan, M.~E. (2012).
\newblock {\em {Variational Learning for Latent Gaussian Models of Discrete
  Data}}.
\newblock PhD thesis, {University of British Columbia}.

\bibitem[Khan et~al., 2013]{KhanAFS13}
Khan, M.~E., Aravkin, A.~Y., Friedlander, M.~P., and Seeger, M. (2013).
\newblock Fast dual variational inference for non-conjugate latent {Gaussian}
  models.
\newblock In {\em International Conference on Machine Learning}.

\bibitem[Khan et~al., 2016]{khan2016faster}
Khan, M.~E., Babanezhad, R., Lin, W., Schmidt, M., and Sugiyama, M. (2016).
\newblock {Faster Stochastic Variational Inference using Proximal-Gradient
  Methods with General Divergence Functions}.
\newblock In {\em Proceedings of the Conference on Uncertainty in Artificial
  Intelligence}.

\bibitem[Khan et~al., 2015]{Khan15nips}
Khan, M.~E., Baque, P., Flueret, F., and Fua, P. (2015).
\newblock {Kullback-Leibler Proximal Variational Inference}.
\newblock In {\em Advances in Neural Information Processing Systems}.

\bibitem[Khan et~al., 2010]{Khan10}
Khan, M.~E., Marlin, B., Bouchard, G., and Murphy, K. (2010).
\newblock {Variational Bounds for Mixed-Data Factor Analysis}.
\newblock In {\em Advances in Neural Information Processing Systems}.

\bibitem[Khan et~al., 2012]{Khan12nips}
Khan, M.~E., Mohamed, S., and Murphy, K. (2012).
\newblock {Fast Bayesian inference for non-conjugate Gaussian process
  regression}.
\newblock In {\em Advances in Neural Information Processing Systems}.

\bibitem[Kingma and Welling, 2013]{kingma2013auto}
Kingma, D.~P. and Welling, M. (2013).
\newblock Auto-encoding variational {Bayes}.
\newblock {\em arXiv preprint arXiv:1312.6114}.

\bibitem[Knowles, 2012]{knowles2012bayesian}
Knowles, D.~A. (2012).
\newblock {\em Bayesian non-parametric models and inference for sparse and
  hierarchical latent structure}.
\newblock PhD thesis, University of Cambridge.

\bibitem[Knowles, 2015]{knowles2015stochastic}
Knowles, D.~A. (2015).
\newblock {Stochastic gradient variational Bayes for gamma approximating
  distributions}.
\newblock {\em arXiv preprint arXiv:1509.01631}.

\bibitem[Knowles and Minka, 2011]{knowles2011non}
Knowles, D.~A. and Minka, T. (2011).
\newblock Non-conjugate variational message passing for multinomial and binary
  regression.
\newblock In {\em Advances in Neural Information Processing Systems}, pages
  1701--1709.

\bibitem[Krishnan et~al., 2015]{krishnan2015deep}
Krishnan, R.~G., Shalit, U., and Sontag, D. (2015).
\newblock Deep kalman filters.
\newblock {\em arXiv preprint arXiv:1511.05121}.

\bibitem[Kuss and Rasmussen, 2005]{Kuss05}
Kuss, M. and Rasmussen, C. (2005).
\newblock Assessing approximate inference for binary {G}aussian process
  classification.
\newblock {\em Journal of Machine Learning Research}, 6:1679--1704.

\bibitem[Marlin et~al., 2011]{marlin2011piecewise}
Marlin, B., Khan, M., and Murphy, K. (2011).
\newblock Piecewise bounds for estimating {B}ernoulli-logistic latent
  {G}aussian models.
\newblock In {\em International Conference on Machine Learning}.

\bibitem[Minka, 2001]{Minka01b}
Minka, T. (2001).
\newblock Expectation propagation for approximate {B}ayesian inference.
\newblock In {\em Proceedings of the Conference on Uncertainty in Artificial
  Intelligence}.

\bibitem[Mohamed et~al., 2009]{mohamed2009bayesian}
Mohamed, S., Ghahramani, Z., and Heller, K.~A. (2009).
\newblock Bayesian exponential family pca.
\newblock In {\em Advances in neural information processing systems}, pages
  1089--1096.

\bibitem[Nemirovskii et~al., 1983]{nemirovskii1983problem}
Nemirovskii, A., Yudin, D.~B., and Dawson, E.~R. (1983).
\newblock Problem complexity and method efficiency in optimization.

\bibitem[Opper and Archambeau, 2009]{Opper:09}
Opper, M. and Archambeau, C. (2009).
\newblock {The Variational Gaussian Approximation Revisited}.
\newblock {\em Neural Computation}, 21(3):786--792.

\bibitem[Ranganath et~al., 2014]{ranganath2013black}
Ranganath, R., Gerrish, S., and Blei, D.~M. (2014).
\newblock Black box variational inference.
\newblock In {\em International conference on Artificial Intelligence and
  Statistics}, pages 814--822.

\bibitem[Ranganath et~al., 2015]{ranganath2015deep}
Ranganath, R., Tang, L., Charlin, L., and Blei, D.~M. (2015).
\newblock Deep exponential families.
\newblock In {\em International conference on Artificial Intelligence and
  Statistics}.

\bibitem[Raskutti and Mukherjee, 2015]{raskutti2015information}
Raskutti, G. and Mukherjee, S. (2015).
\newblock The information geometry of mirror descent.
\newblock {\em IEEE Transactions on Information Theory}, 61(3):1451--1457.

\bibitem[Rue and Held, 2005]{rue2005gaussian}
Rue, H. and Held, L. (2005).
\newblock {\em Gaussian Markov random fields: theory and applications}.
\newblock CRC Press.

\bibitem[Salimans et~al., 2013]{salimans2013fixed}
Salimans, T., Knowles, D.~A., et~al. (2013).
\newblock Fixed-form variational posterior approximation through stochastic
  linear regression.
\newblock {\em Bayesian Analysis}, 8(4):837--882.

\bibitem[Titsias and L{\'a}zaro-Gredilla, 2014]{titsias2014doubly}
Titsias, M. and L{\'a}zaro-Gredilla, M. (2014).
\newblock Doubly stochastic variational {Bayes} for non-conjugate inference.
\newblock In {\em International Conference on Machine Learning}.

\bibitem[Wainwright and Jordan, 2008]{WainwrightJordan08}
Wainwright, M.~J. and Jordan, M.~I. (2008).
\newblock Graphical models, exponential families, and variational inference.
\newblock {\em Foundations and Trends in Machine Learning}, 1--2:1--305.

\bibitem[Wang and Blei, 2013]{WangBlei}
Wang, C. and Blei, D.~M. (2013).
\newblock Variational inference in nonconjugate models.
\newblock {\em Journal of Machine Learning Research}, 14(1):1005--1031.

\bibitem[Winn and Bishop, 2005]{winn2005variational}
Winn, J. and Bishop, C.~M. (2005).
\newblock Variational message passing.
\newblock {\em Journal of Machine Learning Research}, 6(Apr):661--694.

\end{thebibliography}

\newpage
\begin{appendix}
    \onecolumn
{\Large {\bf Supplementary Material For Conjugate-Computation Variational Inference
}}
\section{Definition of Conjugacy}
\label{conj:def}
The following definition is taken from Chapter 2 of \cite{gelman2014bayesian}. Suppose $\mathcal{F}$ is the class of data distributions $p(\vy|\vz)$ parameterized by $\vz$, and $\mathcal{P}$ is the class of prior distributions for $\vz$, then the class $\mathcal{P}$ is \emph{conjugate} for $\mathcal{F}$ if
\begin{align}
p(\vz|\vy) \in \mathcal{P}, \quad \forall p(\cdot|\vtheta) \in \mathcal{F} \,\, \textrm{and} \, \, p(\cdot) \in \mathcal{P}
\end{align}

\section{Variational Inference in the GP Model and Issues with the SGD Algorithm}
\label{sec:issueswithsgd}
   \label{sec:gp_lb}
To derive the lower bound we substitute the joint-distribution \eqref{eq:gp_joint} in the lower bound \eqref{eq:lb} and simplify:
\begin{align}
   \elbofinal(q) &:= \myexpect_q [\log p(\vy,\vz) - \log q(\vz)] \\
                 &= \myexpect_q \sqr{ \sum_{n=1}^N  \log p(y_n|z_n) + \log \gauss(\vz|0,\vK) - \log \gauss(\vz|\vm,\vV)} \\
                 &= \sum_n \myexpect_q\sqr {\log p(y_n|z_n) } - \dkls{}{\gauss(\vz|\vm,\vV)}{\gauss(\vz|0,\vK)} \\
                 &= \sum_n \myexpect_q\sqr {\log p(y_n|z_n) } + \half \sqr{\log|\vV| - \trace\rnd{\vK^{-1}\vV} - \vm^T\vK^{-1}\vm } + \textrm{constant}
\end{align}
We can see the special structure of the lower bound. The first term here might be intractable, but the second term (the KL divergence term) and its gradients have a closed-form expression when $q$ is Gaussian. Therefore we do not need stochastic-gradient approximations for this term. A naive SGD implementation might ignore this. 

There are at least three alternate parameterizations of the posterior $\gauss(\vz|\vm,\vV)$ in this case. We could use the natural parameters $\{ \vV^{-1}\vm, -\half\vV^{-1} \}$, or the mean parameters $\{\vm, \vV + \vm\vm^T\}$, or simply use $\{\vm,\vV\}$ itself.
   Different parameterization lead to different updates whose computational efficiency differ drastically. 
   For example, if we choose to update the inverse of covariance $\vV^{-1}$, we get the following updates:
   \begin{align}
      \vV_{t+1}^{-1} &= \vV_t^{-1} + \frac{\rho_t}{2} \sqr{ \left. \deriv{}{\vV^{-1}} \sum_n \myexpect_q\sqr {\log p(y_n|z_n) } \right\rvert_{V = V_t} - \half \vV_t + \half \vV_t \vK^{-1}\vV_t} 
      \label{eq:sgd_gp_1}
   \end{align}
   On the other hand, if we choose to update the covariance $\vV$ instead, we get the following update:
   \begin{align}
      \vV_{t+1} &= \vV_t + \frac{\rho_t}{2} \sqr{ \left. \deriv{}{\vV} \sum_n \myexpect_q\sqr {\log p(y_n|z_n) } \right\rvert_{V = V_t} + \half \vV_t^{-1} - \half \vK^{-1}} 
      \label{eq:sgd_gp_2}
   \end{align}

   The two updates are quite different. The second update involves less computation than the first one because the last term in the first update involves multiplication of three matrices. 
   Both of these steps require explicitly forming the matrix $\vV$ and $\vV^{-1}$, which might be infeasible for large $N$ (e.g. a million data points).
   In addition, they both compute inverse of $\vK$ which might be ill-conditioned.

   The above parameterization requires $O(N^2)$ memory, however, it is well known that for the GP model, there are only $O(N)$ free parameters \citep{Opper:09}.
   Choosing any of the three parameterizations discussed earlier will lead to an algorithm that is an order of magnitude slower than the best option.

   Our CVI method completely avoids this re-parameterization issue by expressing the gradient steps as a conjugate computation step.
   Our updates naturally only have $O(N)$ free variational parameter which are obtained by using stochastic-gradients of the non-conjugate terms $\myexpect_q[\log p(y_n|z_n)]$. We can reduce the number of gradients to be computed in each iteration $t$ to $O(1)$ by using a doubly-stochastic scheme. 

	\subsection{Stochastic Gradients with respect to the Mean Parameters}
\label{sec:gp_grad}
In this section, we explain the computation of the gradient of $f_n = \myexpect_q [\log p(y_n|z_n)]$ with respect to the following mean parameter of the Gaussian distribution $q(z_n) = \gauss(z_n|m_n,V_{nn})$:
\begin{align}
\mu_n^{(1)} := m_n, \quad \mu_n^{(2)} := V_{nn} + m_n^2
\end{align}

According to \citep{Opper:09}, the gradient with respect to the mean, $m_n$ and the variance, $V_{nn}$ are:
\begin{align}
   \frac{\partial f_n}{\partial m_n} = \myexpect_q \sqr{ \frac{\partial f_n }{\partial z_n} }, \quad \frac{\partial f_n}{\partial V_{nn}} = \half \myexpect_q\sqr{ \frac{\partial^2 f_n }{\partial z_n^2} }
\end{align}
Therefore, we can easily approximate these gradients by using the Monte Carlo method.
By using the chain rule, we can express the gradient with respect to the mean parameters in terms of the gradients with respect to $m_n$ and $V_{nn}$ and then use Monte Carlo. We derive these expressions below.

For notation simplicity, we drop $n$ from now and refer to $m_n$ and $V_{nn}$ as $m$ and $v$, respectively. 
We first express $m$ and $v$ in terms of the mean parameters: $m = \mu^{(1)}$ and $v = \mu^{(2)} - (\mu^{(1)})^2$.
By using the chain rule, we express the gradient with respect to the mean parameters in terms of the gradients with respect to $m$ and $v$:
\begin{align}
\frac{\partial f}{\partial \mu^{(1)}} & = \frac{\partial f}{\partial m}\frac{\partial m}{\partial \mu^{(1)}} + \frac{\partial f}{\partial v}\frac{\partial v}{\partial \mu^{(1)}} = \frac{\partial f}{\partial m} - 2 \frac{\partial f}{\partial v} m \\
\frac{\partial f}{\partial \mu^{(2)}} & = \frac{\partial f}{\partial m}\frac{\partial m}{\partial \mu^{(2)}} + \frac{\partial f}{\partial v}\frac{\partial v}{\partial \mu^{(2)}} = \frac{\partial f}{\partial v}  
\end{align}

\section{Basics of Exponential Families}
\label{app:basics}
We summarize a few results regarding exponential family.
Details of these results can be found in Chapter 3 of \cite{WainwrightJordan08}.
We assume that $q(\vz|\vlambda)$ takes the following exponential form:
\begin{align}
   q(\vz|\vlambda) = h(\vz) \exp\crl{\ang{\vlambda, \vphi(\vz)} - A(\vlambda)}
\end{align}
where $\vphi := [\phi_1,\phi_2,\ldots,\phi_M]$ is a vector of sufficient statistics,
$\vlambda := [\lambda_1,\lambda_2,\ldots,\lambda_M]^T$ is a vector of natural parameters, 
$\ang{\va,\vb}$ is an inner product,
and $A(\vlambda)$ is the log-partition function.
The set of natural parameters is denotes by $\Omega := \{\vlambda\in\real^M | A(\vlambda) < \infty\}$.

We call the above representation \emph{minimal} when there does not exist a nonzero vector $\va\in\real^M$ such that the linear combination $\ang{\va,\vphi}$ is equal to a constant.
Minimal representation implies that each distribution $q(\vz|\vlambda)$ has a unique natural parametrization $\vlambda$.

We define the mean parameter associated with a sufficient statistic $\phi_m$ as follows:
\begin{align}
\mu_m := \myexpect_q \sqr{\phi_m(\vz)}
\end{align}
We denote the vector of parameter by $\vmu$.
The set of valid mean parameters is defined as shown below:
\begin{align}
\mathcal{M} := \{\vmu \in \real^M | \exists \,\, p \,\,s.t. \,\,\myexpect_q [\phi_m(\vz)] = \mu_m, \, \forall m \}
\end{align}

It is easy to show that $A(\vlambda)$ is convex, and the mean parameter can be obtained by simply differentiating it, i.e., $\vmu = \nabla A(\vlambda)$.
The mapping $\nabla A$ is one-to-one and onto iff the representation is minimal. 
This property allows us to switch back and forth between $\Omega$ and $\mathcal{M}$. 

Since $\nabla A$ is convex, we can find its convex conjugate as follows:
\begin{align}
A^*(\vmu) := \sup_{\lambda \in \Omega} \crl{\ang{\vmu,\vlambda} - A(\vlambda) }
\end{align}
It is easy to see that $\vlambda = \nabla A^*(\vmu)$, therefore the pair of operators $(\nabla A, \nabla A^*)$ lets us switch back and forth between $\Omega$ and $\mathcal{M}$.

Bregman divergences associated with functions $A$ and $A^*$ is defined as follows:
\begin{align}
   \mathbb{B}_{A}(\vlambda_1\|\vlambda_2) &:= A(\vlambda_1) - A(\vlambda_2) - \ang{\vlambda_1 - \vlambda_2, \nabla_\lambda A(\vlambda_2)} \\
   \mathbb{B}_{A^*}(\vmu_1\|\vmu_2) &:= A^*(\vmu_1) - A^*(\vmu_2) - \ang{\vmu_1 - \vmu_2, \nabla_\mu A^*(\vmu_2)}
\end{align}

\section{Proof of Theorem \ref{thm:main}}
\label{app:proof_thm1}

To simplify the notation, we will refer to $\tp_c(\vy,\vz)$ and $\tp_{nc}(\vy,\vz)$ by simply $\tp_c$ and $\tp_{nc}$ respectively. Similarly, we will refer to $q(\vz|\vlambda)$ and $q(\vz|\vlambda_t)$ by $q$ and $q_t$ respectively.
Using this notation and the split of the joint distribution given in Assumption 2, the variational lower bound can be written as follows:
\begin{equation}
\begin{split}
   \widetilde{\elbofinal}(\vmu) = \elbofinal(\vlambda) =
   \myexpect_q[ \log \tp_{nc}] + \myexpect_q [\log (\tp_c/q) ]
    \label{eq:elbo}
\end{split}
\end{equation}
We prove Theorem \ref{thm:main} by proving several lemmas.
We start with the following lemma which shows that the linear approximation of the second term (the conjugate part) in \eqref{eq:elbo} simplifies to the term itself plus a KL divergence term.
\begin{lemma}
   \label{lemma:grad_conj}
   For the conjugate part of the lower bound, we have the following property:
   \begin{align}
      \ang{ \vmu, \nabla_{\mu} \myexpect_q [\log (\tp_c/q) ] \rvert_{\mu=\mu_t} } &= \myexpect_q \sqr{ \log (\tp_{c}/q)} + \myexpect_q \sqr{ \log (q/q_t)} + c
   \end{align}
   where $c$ is a constant that does not depend on $\vmu$ (or $\vlambda$).
\end{lemma}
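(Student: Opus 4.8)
The plan is to compute the conjugate part $\myexpect_q[\log(\tp_c/q)]$ explicitly as a function of the mean parameter $\vmu$, differentiate it, and then show that the resulting inner product matches the claimed right-hand side up to a $\vmu$-independent constant. The entire argument rests on the exponential-family identities in Appendix \ref{app:basics}, so the only real labor is the bookkeeping of $A$ and its convex conjugate $A^*$.

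First I would invoke Assumptions 1 and 2. Since $\tp_c \propto h(\vz)\exp\{\ang{\vphi(\vz),\veta}\}$ and $q = h(\vz)\exp\{\ang{\vphi(\vz),\vlambda} - A(\vlambda)\}$ share the base measure $h$ and sufficient statistics $\vphi$, the base measures cancel and $\log(\tp_c/q) = \ang{\vphi(\vz),\veta-\vlambda} + A(\vlambda) + c_0$, where $c_0$ collects the ($\vmu$-independent) normalizer of $\tp_c$. Taking $\myexpect_q$ and using $\vmu = \myexpect_q[\vphi(\vz)]$ gives $\myexpect_q[\log(\tp_c/q)] = \ang{\vmu,\veta-\vlambda} + A(\vlambda) + c_0$. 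Now I would use the Fenchel relation $A(\vlambda) = \ang{\vmu,\vlambda} - A^*(\vmu)$, valid at the dual pair $\vmu = \nabla A(\vlambda)$, to eliminate $\vlambda$, obtaining the clean expression $\myexpect_q[\log(\tp_c/q)] = \ang{\vmu,\veta} - A^*(\vmu) + c_0$ as a function of $\vmu$ alone.

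Differentiating with $\nabla_\mu A^*(\vmu) = \vlambda$ yields $\nabla_\mu \myexpect_q[\log(\tp_c/q)] = \veta - \vlambda$, which at $\vmu=\vmu_t$ equals $\veta - \vlambda_t$; hence the left-hand side of the lemma is $\ang{\vmu, \veta - \vlambda_t}$. Separately I would expand the KL term the same way: $\myexpect_q[\log(q/q_t)] = \ang{\vmu,\vlambda-\vlambda_t} - A(\vlambda) + A(\vlambda_t)$, and again substituting $A(\vlambda) = \ang{\vmu,\vlambda} - A^*(\vmu)$ collapses it to $A^*(\vmu) - \ang{\vmu,\vlambda_t} + A(\vlambda_t)$ (this is exactly the Bregman divergence $\mathbb{B}_{A^*}(\vmu\|\vmu_t)$, a remark worth recording since it ties the KL term to the proximal penalty in \eqref{eq:prox}). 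Adding the two expectations, the $-A^*(\vmu)$ from the conjugate part cancels the $+A^*(\vmu)$ from the KL term, leaving $\myexpect_q[\log(\tp_c/q)] + \myexpect_q[\log(q/q_t)] = \ang{\vmu,\veta-\vlambda_t} + (c_0 + A(\vlambda_t))$. Comparing with $\ang{\vmu,\veta-\vlambda_t}$ from the left-hand side shows the two agree up to the constant $c = -(c_0 + A(\vlambda_t))$, which is the claim.

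The conceptual point—and the step I would flag as the crux rather than a genuine obstacle—is the cancellation of $A^*(\vmu)$: the conjugate expectation is affine-minus-$A^*$ in $\vmu$, and the KL term restores precisely that $A^*$, so their sum is \emph{affine} in $\vmu$. For an affine function the first-order expansion is exact, which is why linearizing only the conjugate part (the left-hand side) reproduces the full conjugate-plus-KL expression up to a constant. The one place to be careful is checking that the proportionality constants hidden in Assumption 2 (the terms ``unnormalized with respect to $\vz$'') are genuinely independent of $\vmu$ so that they may be absorbed into $c$; this is immediate since they do not involve $\vz$.
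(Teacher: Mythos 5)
Your proof is correct and follows essentially the same route as the paper's: both reduce the left-hand side to $\ang{\vmu,\veta-\vlambda_t}$ by showing the gradient of the conjugate part is $\veta-\vlambda$, and then match this affine function against the right-hand side up to a $\vmu$-independent constant. The only differences are presentational — you obtain the gradient via the Legendre identities $A(\vlambda)=\ang{\vmu,\vlambda}-A^*(\vmu)$ and $\nabla_\mu A^*(\vmu)=\vlambda$, where the paper uses the chain rule through the Fisher information matrix and a cancellation of $\vC_\lambda^{-1}\vmu$ terms, and you verify the final identity by expanding the right-hand side directly rather than recombining $\ang{\vmu,\veta-\vlambda_t}$ into $\myexpect_q[\log(\tp_c/q_t)]$ and splitting.
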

\begin{proof}
By substituting the definitions of $\tp_c$ and $q$, we get the following:
\begin{align}
\ang{ \vmu, \nabla_{\mu} \myexpect_q [\log (\tp_c/q) ] \rvert_{\mu=\mu_t} }
&= \ang{ \vmu, \nabla_{\mu} \myexpect_q [\ang{\vphi(\vz), \veta-\vlambda} + A(\vlambda)] \rvert_{\mu=\mu_t} } = \ang{ \vmu, \nabla_{\mu} [\ang{\vmu, \veta-\vlambda} + A(\vlambda)] \rvert_{\mu=\mu_t} } 
\end{align}
We derive the gradient w.r.t. $\vmu$ below:
\begin{align}
\nabla_\mu \sqr{ \ang{\vmu, \veta - \vlambda} + A(\vlambda)} =
 \veta - \vlambda - \ang{\vmu, \nabla_\mu \vlambda} + \nabla_\mu A(\vlambda) =
\veta - \vlambda - \vC_\lambda^{-1} \vmu+ \vC_\lambda^{-1}\vmu = \veta - \vlambda
\label{eq:grad_mu_1}
\end{align}
where $\vC_\lambda$ is the Fisher-information matrix and we use the fact that the gradient w.r.t. $\vmu$ is equal to $\vC_\lambda^{-1}$ times the gradient w.r.t. $\vlambda$ (this is explained in Appendix \ref{app:gradmu}.
Substituting this back,
\begin{align}
\ang{ \vmu, \nabla_{\mu} \myexpect_q [\log (\tp_c/q) ] \rvert_{\mu=\mu_t} } &= \ang{ \vmu, \veta - \vlambda_t } \\
& = \myexpect_q \sqr{ \ang{\vphi(\vz), \veta - \vlambda_t} + A(\vlambda_t)  } + c \\
&= \myexpect_q \sqr{ \log (\tp_{c}/q_t)} + c \\
&= \myexpect_q \sqr{ \log (\tp_{c}/q)} + \myexpect_q \sqr{ \log (q/q_t)} + c
\end{align}
\end{proof}
The following lemma shows that the Bregman divergence is equal to the KL divergence which has a convenient form.
\begin{lemma}
   \label{lemma:bregman}
   For all $q$ and $q_t$ satisfying Assumption 1, we have the following relationships:
   \begin{align}
      \mathbb{B}_{A^*}(\vmu\|\vmu_t) = \mathbb{B}_{A}(\vlambda_t\|\vlambda) = \myexpect_q[\log (q/q_t)]
   \end{align}
\end{lemma}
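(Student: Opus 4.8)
The plan is to prove the chain of equalities in Lemma~\ref{lemma:bregman} by reducing everything to the log-partition function $A$ and its convex conjugate $A^*$, exploiting the minimality assumption (Assumption~1) which guarantees the one-to-one correspondence $\vmu = \nabla A(\vlambda)$ and $\vlambda = \nabla A^*(\vmu)$. The statement has two parts: first that $\mathbb{B}_{A^*}(\vmu\|\vmu_t) = \mathbb{B}_{A}(\vlambda_t\|\vlambda)$ (a duality of Bregman divergences), and second that this common quantity equals $\myexpect_q[\log(q/q_t)]$, which is just the KL divergence $\dkls{}{q}{q_t}$ written out.

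\textbf{Step 1 (Bregman duality).} I would start from the definition of $\mathbb{B}_{A^*}$ given in Appendix~\ref{app:basics},
\begin{align}
   \mathbb{B}_{A^*}(\vmu\|\vmu_t) = A^*(\vmu) - A^*(\vmu_t) - \ang{\vmu - \vmu_t, \nabla_\mu A^*(\vmu_t)} ,
\end{align}
and substitute the conjugate relation $A^*(\vmu) = \ang{\vmu,\vlambda} - A(\vlambda)$ together with $\nabla_\mu A^*(\vmu_t) = \vlambda_t$. Expanding $A^*(\vmu)$ and $A^*(\vmu_t)$ in terms of $A(\vlambda)$ and $A(\vlambda_t)$ and collecting terms, the inner products involving $\vmu$ and $\vmu_t$ should cancel against the $\ang{\vmu-\vmu_t,\vlambda_t}$ term, leaving exactly $A(\vlambda_t) - A(\vlambda) - \ang{\vlambda_t - \vlambda, \vmu}$; since $\vmu = \nabla_\lambda A(\vlambda)$, this is precisely $\mathbb{B}_{A}(\vlambda_t\|\vlambda)$. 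This is the standard fact that the Bregman divergence of a function and that of its conjugate are related by swapping arguments and dual points.

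\textbf{Step 2 (identification with KL).} For the second equality I would compute $\myexpect_q[\log(q/q_t)]$ directly from the exponential-family form~\eqref{eq:approx_post}. Writing $\log(q/q_t) = \ang{\vphi(\vz), \vlambda - \vlambda_t} - A(\vlambda) + A(\vlambda_t)$ and taking the expectation under $q$ (so that $\myexpect_q[\vphi(\vz)] = \vmu$), this becomes $\ang{\vmu, \vlambda - \vlambda_t} - A(\vlambda) + A(\vlambda_t)$. Using once more that $\vmu = \nabla_\lambda A(\vlambda)$, this is exactly the expression $\mathbb{B}_{A}(\vlambda_t\|\vlambda)$ obtained in Step~1, closing the chain.

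\textbf{The main obstacle} is bookkeeping rather than conceptual: I must be careful that the argument order and dual-point assignments in the Bregman definitions are matched correctly, since $\mathbb{B}_{A^*}(\vmu\|\vmu_t)$ and $\mathbb{B}_{A}(\vlambda_t\|\vlambda)$ have their arguments in \emph{reversed} order, which is exactly what the duality produces and is easy to get backwards. A secondary care point is justifying $\nabla_\mu A^*(\vmu_t) = \vlambda_t$, but this is supplied directly by Appendix~\ref{app:basics} (the identity $\vlambda = \nabla A^*(\vmu)$), so no extra work is needed there. Everything else is term-cancellation that follows from minimality.
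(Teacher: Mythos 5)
Your proposal is correct and follows essentially the same route as the paper: the second equality is established by the identical direct expansion of $\myexpect_q[\log(q/q_t)]$ into $A(\vlambda_t) - A(\vlambda) - \ang{\vlambda_t-\vlambda,\nabla A(\vlambda)}$, and the Bregman duality $\mathbb{B}_{A^*}(\vmu\|\vmu_t)=\mathbb{B}_A(\vlambda_t\|\vlambda)$ is the same fact the paper invokes (the paper simply cites it, whereas you derive it explicitly and correctly). No gaps.
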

\begin{proof}
The following equivalence holds between the two Bregman divergences defined using $A$ and $A^*$ (see \cite{raskutti2015information}, for example): $\mathbb{B}_A(\vlambda_t \| \vlambda) = \mathbb{B}_{A^*}(\vmu \| \vmu_t) $. The last equality can be proved as follows:
\begin{align}
\myexpect_q [\log(q/q_t)] &= \myexpect_q \sqr{ \ang{\vphi(\vz), \vlambda} - A(\vlambda) - \ang{\vphi(\vz), \vlambda_t} + A(\vlambda_t) } \\
&= A(\vlambda_t) - A(\vlambda) - \ang{\vlambda_t - \vlambda, \nabla A(\vlambda)}  \\
&= \mathbb{B}_A(\vlambda_t \| \vlambda) = \mathbb{B}_{A^*}(\vmu \| \vmu_t)
\end{align}
\end{proof}
Denoting the gradient of the non-conjugate term by $\vg_t := \widehat{\nabla}_\mu \myexpect_q[ \log \tp_{nc}] \rvert_{\mu=\mu_t}$, the following lemma shows that using Lemma \ref{lemma:grad_conj} and \ref{lemma:bregman} we can get a closed-form solution for \eqref{eq:prox}.
\begin{lemma}
   \label{lemma:post_exp}
   The solution of \eqref{eq:prox} is equal to the mean $\vmu_{t+1}$ of the following distribution: 
   \begin{align}
      q_{t+1} \propto \crl{ e^{\ang{\boldsymbol{\phi}(\mathbf{z}), \mathbf{g}_t}} \tp_c }^{\beta_t} (q_t)^{1-\beta_t}
   \end{align}
\end{lemma}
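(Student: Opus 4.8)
The plan is to substitute the two preceding lemmas into the objective of \eqref{eq:prox} and show that, after collecting terms and rescaling, the objective becomes (up to an additive constant independent of $\vmu$) a negative KL divergence between $q$ and the claimed target, so that its maximizer is exactly $q_{t+1}$. First I would use the split $\widetilde{\elbofinal}(\vmu) = \myexpect_q[\log\tp_{nc}] + \myexpect_q[\log(\tp_c/q)]$ from \eqref{eq:elbo} together with linearity of the inner product to write the gradient term as $\ang{\vmu, \widehat{\nabla}_\mu \widetilde{\elbofinal}(\vmu_t)} = \ang{\vmu,\vg_t} + \ang{\vmu, \nabla_\mu \myexpect_q[\log(\tp_c/q)]\rvert_{\mu=\mu_t}}$. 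To the second piece I apply Lemma \ref{lemma:grad_conj}, replacing it by $\myexpect_q[\log(\tp_c/q)] + \myexpect_q[\log(q/q_t)] + c$; to the proximity term I apply Lemma \ref{lemma:bregman}, replacing $\mathbb{B}_{A^*}(\vmu\|\vmu_t)$ by $\myexpect_q[\log(q/q_t)]$. I would also rewrite $\ang{\vmu,\vg_t} = \myexpect_q[\ang{\vphi(\vz),\vg_t}]$ using $\vmu = \myexpect_q[\vphi(\vz)]$, so that every surviving term is an expectation under $q$.

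Next I would collect the $\myexpect_q[\log q]$ and $\myexpect_q[\log q_t]$ contributions. The ratio $\myexpect_q[\log(q/q_t)]$ appears with coefficient $+1$ (from Lemma \ref{lemma:grad_conj}) and $-1/\beta_t$ (from the proximity term), and combining these with the $-\log q$ hidden in $\myexpect_q[\log(\tp_c/q)]$ leaves coefficient $-1/\beta_t$ on $\log q$ and coefficient $1/\beta_t - 1$ on $\log q_t$. Multiplying the whole objective by the positive constant $\beta_t$ (which does not change the $\arg\max$) then gives $\myexpect_q[\ang{\vphi(\vz),\vg_t} + \beta_t\log\tp_c + (1-\beta_t)\log q_t - \log q] + \textrm{const}$, which I would recognize as $\myexpect_q[\log \widetilde r - \log q] + \textrm{const}$ with $\widetilde r := \crl{e^{\ang{\vphi(\vz),\vg_t}}\tp_c}^{\beta_t}(q_t)^{1-\beta_t}$, i.e. $-\mathbb{D}_{KL}(q\|r) + \textrm{const}$ where $r$ is $\widetilde r$ normalized.

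Finally I would invoke non-negativity of the KL divergence (equality iff $q=r$) to conclude that the maximizer is $q = r = q_{t+1}$, whose mean is $\vmu_{t+1}$. The one step that needs care — and the main obstacle — is that \eqref{eq:prox} maximizes over the minimal exponential family indexed by $\mathcal{M}$, not over all distributions, so the unconstrained KL minimizer $r$ must be shown to lie inside that family for the constrained and unconstrained optima to coincide. Here I would use Assumptions 1 and 2: substituting $\tp_c \propto h(\vz)e^{\ang{\vphi(\vz),\veta}}$ and $q_t = h(\vz)e^{\ang{\vphi(\vz),\vlambda_t} - A(\vlambda_t)}$ into $\widetilde r$ yields $\widetilde r \propto h(\vz)\exp\crl{\ang{\vphi(\vz),\, \beta_t(\vg_t+\veta) + (1-\beta_t)\vlambda_t}}$, which is of the required exponential-family form (the powers of $h$ recombine to $h(\vz)$ since the exponents sum to one). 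Hence $r$ is attainable in $\mathcal{M}$, the constrained maximum equals the unconstrained one, and the closed-form solution claimed in Lemma \ref{lemma:post_exp} follows.
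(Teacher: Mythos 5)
Your proposal is correct and follows essentially the same route as the paper's proof: apply Lemma \ref{lemma:grad_conj} to the conjugate part of the linearized objective, replace the Bregman term via Lemma \ref{lemma:bregman}, collect the $\log q$ and $\log q_t$ coefficients into a single KL divergence against the target $\crl{e^{\ang{\vphi(\vz),\vg_t}}\tp_c}^{\beta_t}(q_t)^{1-\beta_t}$, and observe (as the paper does via the base-measure convex-combination remark) that this target lies in the same exponential family so the constrained maximizer is the KL minimizer. The only blemish is a transcription slip in your rescaled objective, where the first term should read $\beta_t\ang{\vphi(\vz),\vg_t}$ rather than $\ang{\vphi(\vz),\vg_t}$ to match $\log$ of your own $\widetilde{r}$; this does not affect the argument.
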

\begin{proof}
Using \eqref{eq:elbo}, we get the following expression for the first term in \eqref{eq:prox} which we simplify in the second line using Lemma 1:
\begin{align}
\ang{\vmu, \widehat{\nabla}_\mu \widetilde{\elbofinal}(\vmu_t) } &= \ang{\vmu, \widehat{\nabla}_\mu \myexpect_q [\log \tp_{nc}] + \widehat{\nabla}_\mu \myexpect_q [\log (\tp_c/q)] } \\ 
&= \ang{\vmu, \widehat{\nabla}_\mu \myexpect_q [\log \tp_{nc}]} + \myexpect_q [\log (\tp_c/q)]  + \myexpect_q [\log (q/q_t)] + c 
\end{align}
Plugging this in \eqref{eq:prox} and using Lemma 2, we get the following objective function:
\begin{align}
&\ang{\vmu, \widehat{\nabla}_\mu \myexpect_q [\log \tp_{nc}]} + \myexpect_q [\log (\tp_c/q)]  + \myexpect_q [\log (q/q_t)] - \frac{1}{\beta_t} \myexpect_q [\log (q/q_t)] \\
&= \ang{\vmu, \widehat{\nabla}_\mu \myexpect_q [\log \tp_{nc}]} + \myexpect_q [\log (\tp_c/q)]  - \frac{1-\beta_t}{\beta_t} \myexpect_q [\log (q/q_t)] \\
&= \myexpect_q \sqr{ \ang{\vphi(\vz), \widehat{\nabla}_\mu \myexpect_q [\log \tp_{nc}]} + \log (\tp_c/q)  - \frac{1-\beta_t}{\beta_t} \log (q/q_t) } \\
&= \myexpect_q \sqr{ \log \frac{\exp\crl{ \ang{\vphi(\vz), \widehat{\nabla}_\mu \myexpect_q [\log \tp_{nc}]} }  \tp_c  q_t^{(1-\beta_t)/\beta_t}}{q^{1+ (1-\beta_t)/\beta_t}} } \\
&= \myexpect_q \sqr{ \log \frac{\exp\crl{ \ang{\vphi(\vz), \widehat{\nabla}_\mu \myexpect_q [\log \tp_{nc}]} } \tp_c  q_t^{(1-\beta_t)/\beta_t}}{q^{1/\beta_t}} } \\
&= \frac{1}{\beta_t} \myexpect_q \sqr{ \log \frac{\rnd{ \exp\crl{ \ang{\vphi(\vz), \widehat{\nabla}_\mu \myexpect_q [\log \tp_{nc}]}  } \tp_c }^{\beta_t} q_t^{(1-\beta_t)}}{q} } 
\end{align}
The numerator is an unnormalized exponential family distribution which takes the same exponential family form as $q$ (note that the base measure $h(\vz)$ is present in both $\tp_c$ and $q_t$ which sums to $h(\vz)$ due to convex combination). The normalizing constant of this distribution does not depend on $\vmu$, therefore the minimum is obtained when the numerator is equal to the denominator (minimum of the KL divergence). This proves the lemma.
\end{proof}
Finally, the following lemma uses recursion to express the solution as a Bayesian inference in a conjugate model.
\begin{lemma}
   \label{lemma:final}
   Given the conditions of Theorem \eqref{thm:main}, the distribution $q_{t+1}$ is equal to the posterior distribution of the following model: $q_{t+1} \propto \exp(\ang{\boldsymbol{\phi}(\mathbf{z}), \widetilde{\boldsymbol{\lambda}}_t } )\,  \tp_c $. 
\end{lemma}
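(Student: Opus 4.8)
The plan is to start from the closed-form solution already obtained in Lemma \ref{lemma:post_exp} and show, by induction on $t$, that the convex-combination expression collapses into the single exponential-family factor $\exp(\ang{\vphi(\vz),\tvlambda_t})$ multiplying $\tp_c$, with $\tvlambda_t$ given exactly by the recursion \eqref{eq:recursion}. This simultaneously proves the claimed conjugate form and verifies that the recursion is consistent with the iterates produced by Step \ref{st:infer}.

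First I would substitute the minimal exponential-family forms $\tp_c \propto h(\vz)\exp\crl{\ang{\vphi(\vz),\veta}}$ and $q_t \propto h(\vz)\exp\crl{\ang{\vphi(\vz),\vlambda_t}}$ into the expression $q_{t+1}\propto\crl{e^{\ang{\vphi(\vz),\vg_t}}\tp_c}^{\beta_t}(q_t)^{1-\beta_t}$ from Lemma \ref{lemma:post_exp}, where $\vg_t:=\widehat{\nabla}_\mu\myexpect_q[\log\tp_{nc}]\rvert_{\mu=\mu_t}$. Raising to the powers $\beta_t$ and $1-\beta_t$ and multiplying, the base measures combine as $h(\vz)^{\beta_t}h(\vz)^{1-\beta_t}=h(\vz)$ (so exactly one copy of $h$ survives, which is what lets $\tp_c$ reappear intact), while the natural parameters add linearly. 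This yields $q_{t+1}\propto h(\vz)\exp\crl{\ang{\vphi(\vz),\,\beta_t\vg_t+\beta_t\veta+(1-\beta_t)\vlambda_t}}$.

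The inductive hypothesis is that $q_t$ has natural parameter $\vlambda_t=\tvlambda_{t-1}+\veta$; the base case $t=1$ holds since $\vlambda_1=\veta=\tvlambda_0+\veta$ with $\tvlambda_0=0$, giving $q_1\propto\tp_c$. Substituting this relation, the combined exponent becomes
\begin{align*}
\beta_t\vg_t+\beta_t\veta+(1-\beta_t)(\tvlambda_{t-1}+\veta) = \underbrace{(1-\beta_t)\tvlambda_{t-1}+\beta_t\vg_t}_{=\,\tvlambda_t}+\veta,
\end{align*}
where the bracketed quantity is precisely the recursion \eqref{eq:recursion}. Hence $q_{t+1}\propto h(\vz)\exp\crl{\ang{\vphi(\vz),\tvlambda_t+\veta}}=\exp\crl{\ang{\vphi(\vz),\tvlambda_t}}\,\tp_c$, which is both the claimed conjugate model and the statement $\vlambda_{t+1}=\tvlambda_t+\veta$ that closes the induction. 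I expect the only delicate points to be this index bookkeeping (matching $\vlambda_t$ to $\tvlambda_{t-1}+\veta$ rather than $\tvlambda_t+\veta$) and the base-measure accounting; everything else is collecting terms.
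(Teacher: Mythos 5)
Your proof is correct and follows essentially the same route as the paper: both start from the closed form of Lemma~\ref{lemma:post_exp} and unroll the convex combination recursively, with the paper demonstrating the pattern explicitly for $q_1,q_2,q_3$ where you instead phrase it as a formal induction with the hypothesis $\vlambda_t=\tvlambda_{t-1}+\veta$. Your version is slightly more careful about the index bookkeeping and the base-measure accounting, but the underlying argument is identical.
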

\begin{proof}
Denote the gradient of the non-conjugate term by $\vg_t := \widehat{\nabla}_\mu \myexpect_q[ \log \tp_{nc}] \rvert_{\mu=\mu_t}$,
If we initialize $q_1 \propto \tp_c$ and $\tvlambda_0 := 0$, we can apply recursion to express $q_{t+1}$ as a conjugate model. We demonstrate this for $q_1,q_2,$ and $q_3$ below:
\begin{align}
q_1 &\propto (\tp_c)^{\beta_0} (\tp_c)^{1-\beta_0} = \tp_c\\
q_2 &\propto \exp{\ang{\vphi(\vz), \, \beta_1 \vg_1}} (\tp_c)^{\beta_1} (q_1)^{1-\beta_1} \\
&= \exp{\ang{\vphi(\vz), \, \beta_1 \vg_1}} (\tp_c)^{\beta_1} \tp_c^{1-\beta_1} \nonumber\\
&= \exp{\ang{\vphi(\vz), \, \beta_1\vg_1 }} \tp_c \\
&= \exp{\ang{\vphi(\vz), \, \tvlambda_1 }} \tp_c \\
q_3 &\propto \exp{\ang{\vphi(\vz), \, \beta_2 \vg_2}} (\tp_c)^{\beta_2} (q_2)^{1-\beta_2} \\
&= \exp{\ang{\vphi(\vz), \, \beta_2 \vg_2 + (1-\beta_2) \tvlambda_1 }} \tp_c \\
&= \exp{\ang{\vphi(\vz), \, \tvlambda_2 }} \tp_c 
\end{align}
Proceeding as above, we get the required result.

\end{proof}

\section{Examples of CVI}
\label{app:ex_cvi}
\subsection{Example: Generalized Linear Model}
\label{app:glm}
A GLM assumes the following joint distribution:
  \begin{align}
  p(\vy,\vz) &:= 
  \underbrace{\sqr{\prod_{n=1}^N p(y_n|\tvx_n^T\vz)}}_{\tilde{p}_{nc}(\mathbf{y},\mathbf{z})} \gauss(\vz|0,\delta\vI)
\end{align}
where $\widetilde{\vx}_n = [1, \vx_n^T]^T$.
For a Gaussian distribution $q := \gauss(\vz|\vm,\vV)$, the data terms $p(y_n|\tvx_n^T\vz)$ are the non-conjugate terms. We define $\eta_n := \tvx_n^T\vz$ and use its mean parameters in a similar way as GPs to obtain the natural parameter approximations $\tlambda_{n,t}^{(1)}$ and $\tlambda_{n,t}^{(2)}$ of the data term $p(y_n|\tvx_n^T\vz)$. 
In step \ref{st:project} of Algorithm \ref{alg:cvi1}, these are updated as follows:
\begin{align}
\tlambda_{n,t}^{(i)} = (1-\beta_t) \tlambda_{n,t-1}^{(i)} + \beta_t \widehat{\nabla}_{\mu_n^{(i)}} \myexpect_{q_t} [\log p(y_n|\eta_n) ] \rvert_{\mu = \mu_t}   \nonumber
\end{align}
where $\mu_n^{(i)}$ is the $i$'th mean parameter of $q(\eta_n)$.

Using the above parameters for the approximations, we can write Step \ref{st:infer} as a conjugate computation in the following Bayesian linear regression:
\begin{align}
    q_{t+1} \propto \sqr{\prod_{n=1}^N \gauss(\tilde{y}_{n,t}|\tvx_n^T \vz, \tilde{\sigma}_{n,t}^2) } \gauss(\vz|0,\delta\vI) \nonumber 
\end{align}
where $\tilde{y}_{n,t} = \tilde{\sigma}_{n,t}^2 \lambda_{n,t}^{(1)}$, $\tilde{\sigma}_{n,t}^2 = -1/(2\tlambda_{n,t}^{(2)})$,

\subsection{Example: Kalman Filters with GLM Likelihoods}
\label{app:kalman}
We seek a Gaussian approximation $q(\vz) = \gauss(\vz|\vm,\vV)$ to the following time-series model (we denote time by $k$ to differentiate it from the iteration $t$):
\begin{align}
   p(\vy,\vz) = \gauss(z_0|0, 1) \prod_{k=1}^K \gauss(z_k|z_{k-1}, \sigma^2) \underbrace{\prod_{k=1}^K p(y_k|z_k)}_{\tp_{nc}(\mathbf{y},\mathbf{z})}
\end{align}
The likelihood terms $p(y_k|z_k)$ are non-conjugate to $q$ and by using our method we can approximate them by $\gauss(\tilde{y}_{k,t}|z_{k-1}, \tilde{\sigma}^2_{k,t})$
where $\tilde{y}_{k,t} = \tilde{\sigma}_{k,t}^2 \lambda_{k,t}^{(1)}$, $\tilde{\sigma}_{k,t}^2 = -1/(2\tlambda_{k,t}^{(2)})$, and $\tlambda_{k,t}^{(i)}$ are updated as follows:
\begin{align}
   \tlambda_{k,t}^{(i)} = (1-\beta_t) \tlambda_{k,t-1}^{(i)} + \beta_t \widehat{\nabla}_{\tilde{\mu}_k^{(i)}} \myexpect_{q_t} [\log p(y_k|z_k) ] \rvert_{\mu=\mu_t}   \nonumber
\end{align}
with $\tilde{\mu}_k^{(i)}$ being the $i$'th mean parameter of $q(z_k)$.

\subsection{Example: A Gamma Distribution Model}
\label{app:gamma1}
We consider a simple non-conjugate Gamma distribution model discussed by \cite{knowles2012bayesian}.
We use the following definition of the Gamma distribution:
$\Ga(x|\alpha,\beta) \propto x^{\alpha-1} e^{-x\beta}$,
where $x, \alpha$, and $\beta$ are all non-negative scalars.

Given a Gamma distributed scalar observation $y$, we place a Gamma prior on the shape parameter $z$, as shown below:
\begin{align}
p(y,z) = \underbrace{\Ga(y|z,1)}_{\tp_{nc}(y,z)} \Ga(z|a,b)
\end{align}
The rate of the likelihood is fixed to 1, and the prior parameters $a$ and $b$ are known.
Our goal is to find the posterior distribution $p(z|y)$ which we will approximate with a Gamma distribution: $q(z) = \Ga(z|\alpha,\beta)$.
Clearly, the likelihood is non-conjugate to $q$.

The sufficient statistics and mean parameters of a Gamma distribution are as follows:
\begin{align}
\phi_1(z) = z, &\,\, \mu_1 := \myexpect_q[ \phi_1(z) ] = \psi(\alpha) - \log \beta \nonumber \\
\phi_2(z) = \log z, &\,\, \mu_2 := \myexpect_q[ \phi_2(z) ] = \alpha/\beta  
\end{align}
where $\psi$ is the digamma function. 
Using these in the CVI updates we get the following update:
\begin{align}
    q_{t+1} &\propto e^{\sqr{ z \tlambda_{t}^{(1)} + (\log z) \tlambda_{t}^{(2)} }} \Ga(z|a,b) 
\end{align}
where $\tlambda_{t}^{(i)}$ are updated as follows for $i=1,2$:
\begin{align}
   \tlambda_{t}^{(i)} &= (1-\beta_t) \tlambda_{t-1}^{(i)}+ \beta_t \widehat{\nabla}_{\tilde{\mu}_i} \myexpect_{q_t} [\log p(y|z) ] \rvert_{\mu=\mu_t} \label{eq:gamma_update}
\end{align}
The approximated term is conjugate to the Gamma distribution and therefore it is straightforward to compute the posterior parameters.

\section{Gradient with respect to $\vmu$ for exponential family}
\label{app:gradmu}
For some distributions in the exponential family, it may be difficult to directly compute the gradient with respect to $\vmu$. 
We propose to express the gradient w.r.t. $\vmu$ in terms of the Fisher information matrix and the gradient w.r.t. the natural parameter, by using the chain rule.
Given the following function of interest $f(\vmu)=\myexpect_{q(\vz)}\sqr{ h(\vz) }$, we can formally express this as follows:
\begin{align}
   \deriv{f}{\vmu} &= \sqr{ \secondderiv{A(\vlambda)}{\vlambda} }^{-1} \deriv{f}{\vlambda} 
\end{align}
Since each of these quantities can be written as expectations, as shown below, we can use the re-parametrization trick \cite{kingma2013auto} along with the Monte Carlo method to approximate them.
\begin{align}
{ \secondderiv{A(\vlambda)}{\vlambda} } &= \frac{\partial \vmu}{\partial \vlambda} = \frac{\partial \myexpect_{q(\vz)}[\vphi(\vz)]}{\partial \vlambda} \\
\frac{\partial f}{\partial \vlambda} & = \frac{\partial \myexpect_{q(\vz)}[h(\vz)]}{\partial \vlambda} 
 \end{align}
where  $\vphi(\vz)$ is the sufficient statistics of $q(\vz)$.

\section{Derivation of the CVI Algorithm for Mean-Field}
\label{app:cvi_mf}
We rewrite the objective function which naturally splits over $i$:
\begin{align}
   \max_\mu \sum_{i=1}^M \sqr{ \left\langle \vmu_i, \widehat{\nabla}_{\mu_i} \widetilde{\elbofinal} (\vmu_t) \right\rangle 
   - \frac{1}{\beta_t} \mathbb{B}_{A^*}(\vmu_i \| \vmu_{i,t}) }
\end{align}
We can optimize each $\vmu_i$ parallely or use a doubly-stochastic method to optimize.

In the following, $\vmu_{/i}$ denotes the mean-parameter vector without $\vmu_i$.

To optimize with respect to a $\vmu_i$, we need to express the lower bound as a function of $\vmu_i$.
By using Assumption 4, the lower bound with respect to $\vmu_i$ can be expressed as a sum over non-conjugate and conjugate parts. We show this below in \eqref{eq:uu1} which is obtained by replacing the joint distribution by the conditional of $\vz_i$. The second step afterwards is obtained by substituting \eqref{eq:ass4} from Assumption 4.
The third step is obtained by using the definition of $q_i(\vz_i|\vlambda_i)$ given in \eqref{eq:qi} in Assumption 3.
The fourth step is obtained by taking the expectation inside.
\begin{align}
\widetilde{L}(\vmu_i,\vmu_{/i}) &= \myexpect_q \sqr{\log p(\vz_i|\vx_{/i}) - \log q_i(\vz_i|\vlambda_i) } + \textrm{constant} \label{eq:uu1}\\  
&= \myexpect_q \sqr{ \log h_i(\vz_i) + \sum_{a\in\mathbb{N}_i} \log \tp_{nc}^{a,i}(\vz_i,\vx_{a/i}) + \sum_{a\in\mathbb{N}_i} \ang{\vphi_i(\vz_i), \veta_{a,i} (\vx_{a/i})} -\log q_i(\vz_i|\vlambda_i) }  + \textrm{constant} \\
&= \myexpect_q \sqr{ \sum_{a\in\mathbb{N}_i} \log \tp_{nc}^{a,i}(\vz_i,\vx_{a/i}) + \sum_{a\in\mathbb{N}_i} \ang{\vphi_i(\vz_i), \veta_{a, i} (\vx_{a/i}) - \vlambda_i} + A_i(\vlambda_i) }  + \textrm{constant} \\
&= \sum_{a\in\mathbb{N}_i} \myexpect_{q} \{ \log \tp_{nc}^{a, i}(\vz_i,\vx_{a/i}) \} + \ang{\vmu_i, \sum_{a\in\mathbb{N}_i} \myexpect_{q_{/i}} \{ \veta_{a, i} (\vx_{a/i}) \} - \vlambda_i} + A_i(\vlambda_i)  + \textrm{constant} 
\end{align}

This is similar to \eqref{eq:elbo} since the first term is non-conjugate while the rest of the terms correspond to conjugate parts in the model. We rewrite this below by using the notation $\tilde{\veta}_{a,i} := \myexpect_{q_{/i,t}} \{ \veta_{a,i} (\vx_{a/i}) \}$:
\begin{align}
   \label{eq:ncvmp1}
   \widetilde{L}(\vmu_i,\vmu_{/i}) &=   \sum_{a\in\mathbb{N}_i} \myexpect_{q} [ \log \tp_{nc}^{a, i} ] + \ang{\vmu_i,  \sum_{a\in\mathbb{N}_i} \widetilde{\veta}_{ai} - \vlambda_i} + A_i(\vlambda_i)  + \textrm{constant} \\
                                   &=   \sum_{a\in\mathbb{N}_i} \myexpect_{q} [ \log \tp_{nc}^{a, i} ] + \myexpect_{q_i} [\log (\tp_c^{i}/ q_i)]  + \textrm{constant}
\end{align}
where $\tp_c^{i}$ is a conjugate factor whose natural parameter is equal to $\sum_{a\in\mathbb{N}_i} \widetilde{\veta}_{ai}$.
Therefore, we can simply use Lemma 1 to 3 to simplify. 

Using the results of Lemma 3, we get the following expression:
   \begin{align}
   q_{i,t+1} &\propto \sqr{ \exp\crl{\left\langle{\boldsymbol{\phi}_i(\mathbf{z}_i), \sum_{a\in\mathbb{N}_i} \widehat{\nabla}_{\mu_i} \myexpect_q[ \log \tp_{nc}^{a,i}] \rvert_{\mu=\mu_t} } \right\rangle } \tp_c^i }^{\beta_t} (q_{i,t})^{1-\beta_t} \\
      &=\sqr{ \exp\crl{\left\langle{\boldsymbol{\phi}_i(\mathbf{z}_i), \sum_{a\in\mathbb{N}_i} \sqr{ \widehat{\nabla}_{\mu_i} \myexpect_q[ \log \tp_{nc}^{a,i}] \rvert_{\mu=\mu_t} + \widetilde{\veta}_{ai} }  } \right\rangle} }^{\beta_t} (q_{i,t})^{1-\beta_t}
   \end{align}
   We define the natural parameter of the approximation term in the exponential:
   \begin{align}
      \tvlambda_{i,t} = \sum_{a\in \mathbb{N}_i} \sqr{ \tilde{\veta}_{ai} + \widehat{\nabla}_{\mu_i} \myexpect_{q_t} [\log \tp_{nc}^{a,i}] \rvert_{\mu=\mu_t} }
   \end{align}
   The natural parameter of $q_{t+1}$ is obtained by taking a convex combination of $\tvlambda_{i,t}$ and the natural parameter of $q_t$, i.e., $\vlambda_{i,t}$:
   \begin{align}
      \vlambda_{i,t+1} = \beta_t \tvlambda_{i,t} + (1-\beta_t) \vlambda_{i,t}
   \end{align}

   \subsection{Equivalence to NC-VMP}
   \label{app:ncvmp}
 We can show that NC-VMP is equivalent to our method under these conditions: the gradients w.r.t. the mean are exact and the step-size is set to 1, i.e., $\beta_t = 1$.
 We now present a formal proof.

We rewrite the lower bound w.r.t. $\vmu_i$ shown in \eqref{eq:ncvmp1}:
\begin{align}
   \widetilde{L}(\vmu_i,\vmu_{/i}) &=   \sum_{a\in\mathbb{N}_i} \myexpect_{q} [ \log \tp_{nc}^{a, i} ] + \left\langle{\vmu_i,  \sum_{a\in\mathbb{N}_i} \widetilde{\veta}_{ai} - \vlambda_i} \right\rangle + A_i(\vlambda_i)  + \textrm{constant} 
\end{align}
By taking the derivative w.r.t. $\vmu_i$ using \eqref{eq:grad_mu_1}, we get the first line below. 
\begin{align}
   \nabla_{\mu_i} \widetilde{L}(\vmu_i,\vmu_{/i}) &=   \sum_{a\in\mathbb{N}_i} \nabla_{\mu_i} \myexpect_{q} [ \log \tp_{nc}^{a, i} ] + \sum_{a\in\mathbb{N}_i} \widetilde{\veta}_{ai} -\vlambda_i 
\end{align}
We define the conjugate factor with natural parameter $\widetilde{\veta}_{ai}$ by $\tp_c^{ai}$. We use the property that the gradient of a conjugate-exponential term, such as $\myexpect_q[\log \tp_c^{ai}]$ w.r.t. $\vmu_i$ is equal to the term itself. We derived this while proving Lemma \ref{lemma:grad_conj} in Appendix \ref{app:proof_thm1} (although it is easy to prove by simply substituting the definition of $\tp_c^{ai}$). Therefore in the second term, we can simply
substitute the gradient of $\myexpect_q[\log \tp_c^{ai}]$ to get the following:
\begin{align}
    \nabla_{\mu_i} \widetilde{L}(\vmu_i,\vmu_{/i}) &=   \sum_{a\in\mathbb{N}_i} \nabla_{\mu_i} \myexpect_{q} [ \log \tp_{nc}^{a, i} ] + \sum_{a\in\mathbb{N}_i} \nabla_{\mu_i} \myexpect_q [\log \tp_c^{a,i} ]  -\vlambda_i \\
                                                  &=   \sum_{a\in\mathbb{N}_i} \nabla_{\mu_i} \myexpect_{q}[ \log p(\vx_a|\vx_{\pa_a}) ]   -\vlambda_i 
\end{align}
where the last line is obtain by using Assumption 4.

We also note that the derivative of the Bregman divergence term $\mathbb{B}_{A^*_i}(\vmu_i\|\vmu_{i,t})$ is equal to $\vlambda_i -\vlambda_{i,t}$.
\begin{align}
   \nabla_{\mu_i} \mathbb{B}_{A_i^*}(\vmu_i\|\vmu_{i,t}) &= \nabla_{\mu_i} \sqr{ A_i^*(\vmu_i) - A_i^*(\vmu_{i,t}) - \ang{\vmu_i-\vmu_{i,t}, \nabla A_i^*(\vmu_{i,t})}} \\
                                                       &= \nabla_{\mu_i} A_i^*(\vmu_i) - \nabla_{\mu_i} A_i^*(\vmu_{i,t}) \\
                                                       &= \vlambda_i -\vlambda_{i,t}
\end{align}
When we use $\beta_t$ = 1, mirror descent reduces to the following:
\begin{align}
   \max_{\mu_i} { \left\langle \vmu_i, \widehat{\nabla}_{\mu_i} \widetilde{\elbofinal} (\vmu_t) \right\rangle 
   -  \mathbb{B}_{A^*}(\vmu_i \| \vmu_{i,t}) }
\end{align}
Taking the derivative w.r.t. $\vmu_i$ and setting it to zero, we get:
\begin{align}
   \vlambda_{i,t+1} &= \sum_{a\in\mathbb{N}_i} \nabla_{\mu_i} \myexpect_{q}[ \log p(\vx_a|\vx_{\pa_a}) ] \rvert_{\mu=\mu_t} = \sum_{a\in\mathbb{N}_i} \vC_{i,t}^{-1} \nabla_{\lambda_i} \myexpect_{q}[ \log p(\vx_a|\vx_{\pa_a}) ] \rvert_{\lambda=\lambda_t}
   \end{align}
   where $\vC_{i,t}$ is the Fisher information matrix of $q_{i,t}$. This is exactly the message used in NC-VMP.

\section{Dataset Details}
\label{app:dataset}
Datasets for Bayesian logistic regression is available at {\footnotesize \url{https://www.csie.ntu.edu.tw/~cjlin/libsvmtools/datasets/binary.html}}, for gamma factor model can be found at {\footnotesize \url{https://github.com/davidaknowles/gamma_sgvb}}, and for Gaussian-process classification can be obtained from {\footnotesize \url{https://github.com/emtiyaz/prox-grad-svi}}. 

For all experiments, we first use grid search to tune model hyper-parameters and then fix them during our experiments. The statistics of the datasets and the model hyper-parameters used are given in Table \ref{data_stat}.

\begin{table}[h]
\center
\caption{A list of models and datasets. $N_{\text{Train}}$ is the number of training data. $K$ is the number of factors. The last column shows the values of hyperparameters. The details of the hyperparameters can be found in Appendix \ref{app:ex_cvi}. For GP classification $\sigma_f$ and $l$ are hyperparameters of the squared-exponential kernel.
}
\begin{tabular}{|l|l||l|l|l|l|c|}
\hline
Model & Dataset & $N$ & $D$ & $N_{\text{Train}}$   & Hyperparameters \\
\hline
\multirow{3}{*}{Bayesian Logistic Regression} & a1a & 32,561 & 123 & 1,605  & $\delta=2.8072$    \\
& a7a & 32,561 & 123 & 16,100 & $\delta=5.0$    \\
& Colon-cancer & 62 & 2000 & 31 &   $\delta=596.3623$  \\
& Australian-scale &  690  &  14 &  345 &   $\delta=10^{-5}$  \\
& Breast-cancer-scale & 683  &  10 & 341  &   $\delta=1.0$  \\
& Covtype-binary-scale & 581,012  &  54 & 290,506  &   $\delta=0.002$  \\
\hline
\hline
\multirow{1}{*}{Gamma Factor Model} & Cytof & 522,656 & 40 & 300,000   &  $\sigma^2=0.1$, $K=40$, $a=b=1.0$  \\
\hline
\hline
\multirow{1}{*}{Gamma Matrix Factorization} & MNIST & 70,000 & 784 & 60,000   &  $a^{(z0)}=b^{(z0)}=a^{(w0)}=0.1$ \\
\multirow{1}{*}{} & & & &  & $b^{(z0)}=0.3$, $K=100$ \\ 
\hline
\hline
\multirow{1}{*}{Gaussian Process Classification} & USPS3vs5 & 1,781 & 256 & 884 &  $\log(\sigma_f)= 5.0$, $\log(l)=2.5$\\
\hline
\end{tabular}
\label{data_stat}
\end{table}

\section{Algorithmic Details and Additional Results}
\label{app:algo_details}
In this section, we include 3 additional methods in our comparisons. We compare to a method called PG-SVI which is similar to the PG-exact method but uses stochastic gradients are used. Similarly, we also compare to a method called CVI-exact which is similar to CVI but uses exact gradients.
For GP classification, we compare to expectation propagation (EP).

Table \ref{alg_stat} gives the details of algorithmic parameters used in our experiments. 

\begin{table}[h]
\center
\caption{Algorithmic Parameters and Model Parameters } 
\begin{tabular}{|l|l|l|l|}
\hline
Model & Datasets   & step size & MC samples \\
\hline
\multicolumn{4}{|c|}{CVI-exact, PG-exact, CVI, S\&K Alg2, S\&K FG ($\beta=\frac{w}{1+w}$)}\\
\hline
 & Colon-cancer  &   $w=0.3$ & 10 \\
Bayesian Logistic Regression & Australian-scale &   $w=0.4$   & 10 \\
  & a1a    & $w=0.4$  & 10 \\
    & a7a    & $w=0.4$  & 10 \\
 & Breast-cancer-scale   & $w=0.3$  & 10\\
  & Covtype-scale    & $w=0.3$  & 10\\
\hline
\hline
\multicolumn{4}{|c|}{Knowles, CVI, where $w_0$ denotes the initial step size in Knowles (Ada-delta)}\\
\hline
  &  &    $w_0=10.0$ (Knowles)  &  \\
Gamma Factor Model & Cytof    &  $\beta=5 \times 10^{-5}$ (CVI)  & 50 \\
\hline
\hline
\multicolumn{4}{|c|}{ADAM, CVI, where $w_0$ denotes the initial step size in ADAM}\\
\hline
  &  &    $w_0=0.5$ (ADAM)  &  \\
Gamma Matrix Factorization & MNIST   &  $\beta=0.02$ (CVI)  & 10 \\
\hline
\hline
\multicolumn{4}{|c|}{CVI-exact, PG-exact, CVI, PG-SVI  ($\beta=\frac{w}{1+w}$)}\\
\hline
   &     & $w=1.0$ (CVI-exact, PG-exact) & \\
Gaussian Process Classification &  USPS3vs5 &   $w=0.3$ (CVI,PC-SVI ) & 100\\
\hline
\end{tabular}
\label{alg_stat}
\end{table}

\subsection{Additional Results}
\label{app:add}
We compare Bayesian logistic regression on seven real datasets. The results are summarized in Table \ref{fig:BLR2}. 
All methods reach the same performance. Chol is the slowest method. When $D>N$ S\&K-FG is supposed to perform better than S\&K-Alg2, but the situation is reversed when $N>D$. PG-Exact and CVI-exact are expected to have the same performance. CVI is expected to be a faster than them because stochastic
gradients might be cheaper to compute. It is also expected to perform well for both $N>D$
regime and $D>N$ regime.

Additional results for the gamma factor model and gamma matrix factorization model are in Table \ref{fig:GFM2} and \ref{fig:GFM3} respectively.
 
For GP Classification, we present results below where we compare our method (CVI) to the following methods:
expectation propagation (EP), explicit optimization with LBFGS using Cholskey factorization (Chol), Proximal gradient methods (PG-SVI).
For PG-SVI and CVI, we use MC approximation to compute gradient while for CVI-exact, we use exact gradient. Figure \ref{fig:GP} shows the result of Gaussian Process Classification. 
\begin{figure*}[h]
\center
\includegraphics[width = 3in]{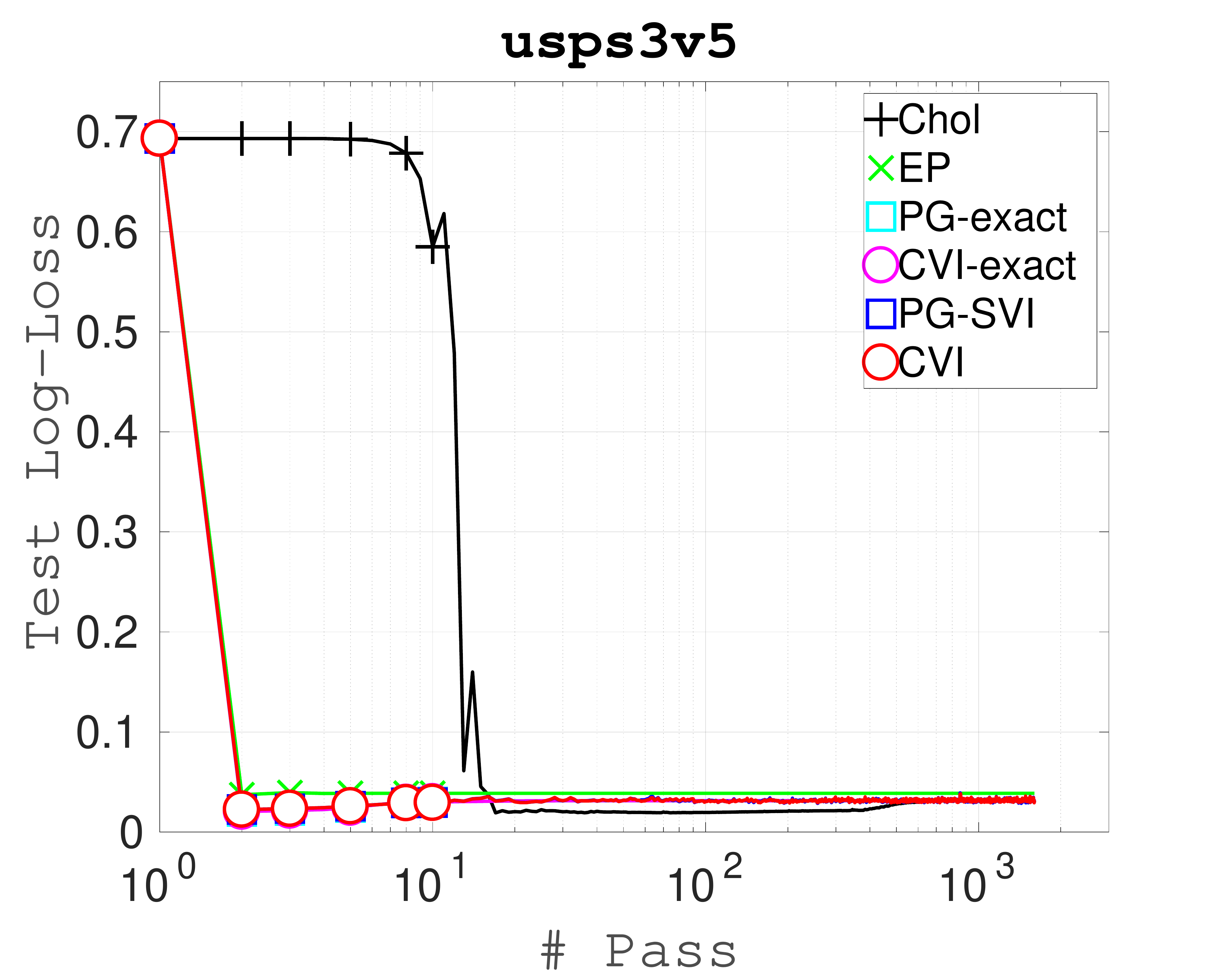}
\caption{Comparison on Gaussian Process Classification.}
\label{fig:GP}
\end{figure*}

\begin{table}[h]
\center
\caption{A summary of the results obtained on Bayesian logistic regression. In all columns, a lower value implies better performance. We report total time of convergence (note that for stochastic methods this is difficult and due to this the reported time might be longer than expected).}
\label{fig:BLR2}
\begin{tabular}{|l||l|l|l|l|}
\hline
Dataset & Methods & Neg-Log-Lik& Log Loss& Time \\
\hline
\multirow{4}{*}{a1a $(N>D)$} 
& Chol & 591.4  & 0.49 &  0.82s \\
& S\&K Alg2 & 590.5 & 0.49 & 0.07s \\
& S\&K FG & 590.5 & 0.49 & 0.09s \\
& PG-exact &  591.6 & 0.49 & 0.15s \\
& CVI-exact   & 590.5 & 0.49 & 0.10s \\
& CVI  & 590.4  & 0.49 & 0.10s \\
\hline
\hline
\multirow{4}{*}{a7a $(N>D)$} 
& Chol &     5,418.1 & 0.47 & 17.79s    \\
& S\&K Alg2 &   5,416.4 &  0.47 &   0.74s \\
& S\&K FG &   5,416.3 &  0.47 & 1.19s   \\
& PG-exact &   5,418.0 &   0.47 &  1.35s \\
& CVI-exact  &   5,416.3 &   0.47 &  1.17s \\
& CVI &  5,416.3  &   0.47 &  0.95s  \\
\hline
\hline
\multirow{4}{*}{Colon-cancer $(D>N)$} 
& Chol &    18.26   & 0.694 &  93.229s  \\
& S\&K Alg2 &   18.26  &  0.693 & 6.142s   \\
& S\&K FG & 18.26   & 0.693  &  0.026s   \\
& PG-exact &   18.25  &    0.696 &  0.052s  \\
& CVI-exact  &   18.26  &    0.698 &  0.012s \\
& CVI  &  18.26   &    0.698 & 0.021s   \\
\hline
\hline
\multirow{4}{*}{Australian-scale $(N>D)$} 
& Chol &  191.62  & 0.473 &   0.193s  \\
& S\&K Alg2 & 190.99    &  0.480 & 0.013s   \\
& S\&K FG & 190.95    & 0.479  &   0.034s  \\
& PG-exact &   191.57 &    0.479 &  0.056s   \\
& CVI-exact   & 191.14   & 0.480    & 0.020s  \\
& CVI &  191.30   &  0.478  & 0.011s   \\
\hline
\hline
\multirow{4}{*}{Breast-cancer-scale $(N>D)$} 
& Chol &  34.21   &  0.139 & 0.110s   \\
& S\&K Alg2 & 34.20     & 0.139   & 0.014s   \\
& S\&K FG & 34.15    &  0.137 &  0.036s   \\
& PG-exact &  34.18   &    0.138  & 0.063s   \\
& CVI-exact   &  34.24    & 0.138     & 0.032s  \\
& CVI  &   34.15  &   0.140 &  0.021s  \\
\hline
\hline
\multirow{4}{*}{Covtype-scale $(N>D)$ but $N$ is large} 
& Chol &  149,641   &  0.7404 & 198.1932s   \\
& S\&K Alg2 & 149,623    & 0.7403  & 56.7972s   \\
& S\&K FG & 149,612  &  0.7403 &  20.309s   \\
& PG-exact &  149,615  &    0.7403 & 42.6777s   \\
& CVI-exact  &  149,615     & 0.7403     & 39.5720s  \\
& CVI  &   149,616  &   0.7403 &  14.3319s  \\
\hline
 \hline
\end{tabular}
\end{table}


\begin{table}[t]
\center
\caption{Results obtained on Gamma factor model, a lower value implies better performance. CVI is much faster than Knowles method.}
\label{fig:GFM2}
\begin{tabular}{|l||l|l|l|l|}
\hline
Dataset & Methods &   Log Loss& Time \\
\hline
\multirow{2}{1pt}{Cytof} & Knowles    & 52.25   &  210.03s   \\
& CVI   &   52.52  & 50.91s    \\
 \hline
\end{tabular}
\end{table}
 
\begin{table}[t]
\center
\caption{Results obtained on Gamma Matrix Factorization, a lower value implies better performance. CVI outperforms ADAM.}
\label{fig:GFM3}
\begin{tabular}{|l||l|l|l|l|}
\hline
Dataset & Methods &  Test Loss & Time \\
\hline
\multirow{2}{1pt}{MNIST} & ADAM    & 0.000125   &  1776.83s   \\
& CVI   &   0.000119  & 1692.64s    \\
 \hline
\end{tabular}
\end{table}

\section{Details of the Gamma Factor Model}
\label{sec:gamma2}
We consider the model discussed by \cite{knowles2015stochastic}. In this model, observations $\vy_{i} \in \mathbb{R}^D$, $i=1 \dots N$ are modeled as
\begin{align}
   p(\vY,\vZ|\sigma^2,a,b) = p(\vY|\vZ) p(\vZ) = {\sqr{ \prod_{i=1}^{N} p(\vy_{i}|\vZ,\sigma^2)}} { \sqr{ \prod_{j=1}^{D}  \prod_{k=1}^{K} p(Z_{jk}|a,b)}}
\end{align}
where each column of $\vY$ follows $p(\vy_{i}|\vZ,\sigma^2)=\gauss(\vy_{i}|0, \vZ \vZ^T+\sigma^2 I)$ and each element of $\vZ$ follows $p(Z_{jk}|a,b)=\Ga(Z_{jk}|a,b)$ with the following parameterization $\Ga(x|\alpha,\beta) \propto x^{\alpha-1} e^{-x\beta}$.

This is a non-conjugate model since the data term $p(\vy|\vZ)$ is not conjugate to the prior $p(\vZ)$.
We choose the following mean-field approximation: 
$$q(\vZ)= \prod_{i=1}^{N} \prod_{j=1}^{D} q(Z_{j,k}).$$
where each factor is a Gamma distribution.


\section{Details of the Gamma Matrix Factorization}
\label{app:def}
Given the data matrix $\vX$ of size $V\times N$, the Gamma matrix-factorization assumes the following joint-distribution:
\begin{equation}
\begin{split}
p(\vX,\vW,\vZ) = \prod_{i=1}^V &\sqr{\prod_{j=1}^N p(X_{i,j}|\vw_i^T\vz_j) } \\
&\times \sqr{ \prod_{i=1}^V \prod_{k=1}^K  \Ga(w_{k,i} | a^{(w0)}_{k,i},b^{(w0)}_{k,i}) }
\sqr{ \prod_{j=1}^N \prod_{k=1}^K \Ga(z_{k,j}|a^{(z0)}_{k,j},b^{(z0)}_{k,j}) }
\end{split}
\end{equation}
where $\vw_i,\vz_j$ are $K$ dimensional latent vectors, $\vW$ and $\vZ$ are $K\times V$ and $K\times N$ matrices respectively.
The likelihood term $p(X_{i,j}|\vw_i^T\vz_j)$ is a Poisson distribution. We use the following gamma posterior:
\begin{align}
q(\vW,\vZ) = \sqr{ \prod_{i=1}^V \prod_{k=1}^K   \Ga(w_{k.i} |a^{(w)}_{k,i},b^{(w)}_{k,i}) } \sqr{ \prod_{j=1}^N  \prod_{k=1}^K  \Ga(z_{k,j}|a^{(z)}_{k,j},b^{(z)}_{k,j}) }
\end{align}
%
%
%

\end{appendix}
\end{document}